  \setlist{leftmargin=*,noitemsep}
\newcommand{\rr}{\mathbb{R}}
\newcommand{\cald}{\mathcal{D}}
\newcommand{\calx}{\mathcal{X}}
\newcommand{\caly}{\mathcal{Y}}
\def\argmax{{\arg\max}}
\newcommand\numberthis{\addtocounter{equation}{1}\tag{\theequation}}
\def\Ex{\mathbf{E}}
\title{Individually Fair Gradient Boosting}
\author{%
  Alexander Vargo \\
  Department of Mathematics \\
  University of Michigan \\
  \texttt{ahsvargo@umich.edu} \\
  \And
  Fan Zhang \\
  School of Information Science and Technology \\
  ShanghaiTech University \\
  \texttt{zhangfan4@shanghaitech.edu.cn} \\
  \And
  Mikhail Yurochkin \\
  IBM Research \\
  MIT-IBM Watson AI Lab \\
  \texttt{mikhail.yurochkin@ibm.com} \\
  \And
  Yuekai Sun \\
  Department of Statistics\\
  University of Michigan\\
  \texttt{yuekai@umich.edu} \\
}
\begin{document}

\maketitle

\begin{abstract}
We consider the task of enforcing individual fairness in gradient boosting. Gradient boosting is a popular method for machine learning from tabular data, which arise often in applications where algorithmic fairness is a concern. At a high level, our approach is a functional gradient descent on a (distributionally) robust loss function that encodes our intuition of algorithmic fairness for the ML task at hand. Unlike prior approaches to individual fairness that only work with smooth ML models, our approach also works with non-smooth models such as decision trees. We show that our algorithm converges globally and generalizes. We also demonstrate the efficacy of our algorithm on three ML problems susceptible to algorithmic bias.
\end{abstract}

\section{Introduction}

In light of the ubiquity of machine learning (ML) methods in high-stakes decision-making and support roles, there is concern about ML models reproducing or even exacerbating the historical biases against certain groups of users.
These concerns are valid: there are recent incidents in which algorithmic bias has led to dire consequences. For example, Amazon recently discovered its ML-based resume screening system discriminates against women applying for technical positions \citep{dastin2018Amazon}.


In response, the ML community has proposed a myriad of formal definitions of algorithmic fairness. Broadly speaking, there are two types of fairness definitions: group fairness and individual fairness \citep{chouldechova2018frontiers}. 
In this paper, we focus on enforcing individual fairness. At a high-level, the idea of individual fairness is the requirement that a fair algorithm should treat similar individuals similarly. For a while, individual fairness was overlooked in favor of group fairness because there is often no consensus on which users are similar for many ML tasks. Fortunately, there is a flurry of recent work that addresses this issue \citep{ilvento2019Metric,wang2019Empirical,yurochkin2020Training,mukherjee2020Simple}. In this paper, we assume there is a fair metric for the ML task at hand and consider the task of individually fair gradient boosting. 



Gradient boosting, especially gradient boosted decision trees (GBDT), is a popular method for tabular data problems \citep{chen2016XGBoost}. Unfortunately, existing approaches to enforcing individual fairness are either not suitable for training non-smooth ML models \citep{yurochkin2020Training} or perform poorly with flexible non-parametric ML models. We aim to fill this gap in the literature. Our main contributions are:
\begin{enumerate}
\item We develop a method to enforce individual fairness in gradient boosting. Unlike other methods for enforcing individually fairness, our approach handles non-smooth ML models such as (boosted) decision trees.
\item We show that the method converges globally and leads to ML models that are individually fair. We also show that it is possible to \emph{certify} the individual fairness of the models \emph{a posteriori}.
\item We show empirically that our method preserves the accuracy of gradient boosting while improving widely used group and individual fairness metrics.
\end{enumerate}

\section{Enforcing individual fairness in gradient boosting}\label{sec:algs}


Consider a supervised learning problem. Let $\mathcal{X}\subset\mathbb{R}^d$ be the input space and $\mathcal{Y}$ be the output space. To keep things simple, we assume $\mathcal{Y} = \{0,1\}$, but our method readily extends to multi-class classification problems. Define $\mathcal{Z} = \mathcal{X} \times \{0,1\}$. We equip $\cX$ with a \emph{fair metric} $d_x$ that measures the similarity between inputs. The fair metric is application specific, and we refer to the literature on fair metric learning \citep{ilvento2019Metric,wang2019compassmetrics,yurochkin2020Training} for ways of picking the fair metric. Our goal is to learn an ML model $f \colon \mathcal{X} \to \{0,1\}$ that is individually fair. Formally, we enforce distributionally robust fairness \citep{yurochkin2020Training}, which asserts that an ML model has similar accuracy/performance (measured by the loss function) on similar samples (see Definition \ref{def:DRF}).

One way to accomplish this is adversarial training \citep{yurochkin2020Training,yurochkin2020SenSeI}. Unfortunately, adversarial training relies on the smoothness of the model (with respect to the inputs), so it cannot handle non-smooth ML models (\eg\ decision trees). We address this issue by considering a restricted adversarial cost function that only searches over the training examples (instead of the entire input space) for similar examples that reveal violations of individual fairness. As we shall see, this restricted adversarial cost function is amenable to functional gradient descent, which allows us to develop a gradient boosting algorithm.

\subsection{Enforcing individual fairness with a restricted adversarial cost function}
\label{sec:lossfunc}

We first review how to train individually fair ML models with adversarial training (see \cite{yurochkin2020Training} for more details) to set up notation and provide intuition on how adversarial training leads to individual fairness. Let $\cF$ be a set of ML models, let $\ell:\cY\times\cY\to\mathbb{R}$ be a smooth loss function (see Section \ref{sec:theory} for concrete assumptions on $\ell$) that measures the performance of an ML model, and let $\mathcal{D} = \{(x_i,y_i)\}_{i=1}^n$ be a set of training data. Define the transport cost function
\begin{equation}\label{cdef}
    c((x_1, y_1), (x_2, y_2)) \triangleq d_x^2(x_1, x_2) + \infty\cdot \mathbf{1}_{\{ y_1 \neq y_2\}}.
\end{equation}
We see that $c((x_1,y_1),(x_2,y_2))$ is small iff $x_1$ and $x_2$ are similar (in the fair metric $d_x$) and $y_1 = y_2$. In other words, $c$ is small iff two similar examples are assigned the same output. Define the optimal transport distance $W$ (with transport cost $c$) on probability distributions on $\cZ$: 
\[\textstyle
W(P_1,P_2) \triangleq \inf_{\Pi\in C(P_1,P_2)}\int_{\cZ\times\cZ}c(z_1,z_2)d\Pi(z_1,z_2),
\]
where $C(P_1,P_2)$ is the set of couplings between $P_1$ and $P_2$ (distributions on $\cZ\times\cZ$ whose marginals are $P_1$ and $P_2$).
This optimal transport distance lifts the fair metric on (points in) the sample space to distributions on the sample space. Two distributions are close in this optimal transport distance iff they assign mass to similar areas of the sample space $\cZ$. Finally, define the adversarial risk function
\begin{equation}\label{eq:pop-loss}\textstyle
    L_r(f) \triangleq \sup_{P:\, W(P,P_*) \leq \epsilon} \mathbb{E}_P[\ell(f(X),Y)],
\end{equation}
where $P_*$ is the data generating distribution and $\eps > 0$ is a small tolerance parameter. The adversarial risk function looks for distributions on the sample space that (i) are similar to the data generating distribution and (ii) increases the risk of the ML model $f$. This reveals differential performance of the ML model on similar samples. This search for differential performance is captured by the notion of distributionally robust fairness:

\begin{definition}[distributionally robust fairness (DRF) \citep{yurochkin2020Training}]
\label{def:DRF}
An ML model $h:\cX\to\cY$ is $(\eps,\delta)$-distributionally robustly fair (DRF) WRT the fair metric $d_x$ iff
\begin{equation}\textstyle
\sup_{P:W(P,P_n) \le \eps} \textstyle \int_{\cZ}\ell(z,h)dP(z) \le \delta.
\label{eq:correspondenceFairness}
\end{equation}
\end{definition}


In light of the preceding developments, a natural cost function for training individually fair ML models is the adversarial cost function:
\begin{equation}\label{eq:unrestricted-adversarial-loss}\textstyle
    L_e(f) \triangleq \sup_{P:\, W(P,P_n) \leq \epsilon} \mathbb{E}_P[\ell(f(X),Y)],
\end{equation}
where $P_n$ is the empirical distribution of the training data. This is the empirical counterpart of \eqref{eq:pop-loss}, and it works well for training smooth ML models \citep{yurochkin2020Training}. Unfortunately, \eqref{eq:unrestricted-adversarial-loss} is hard to evaluate for non-smooth ML models: it is defined as the optimal value of an optimization problem, but the gradient $\partial_x\ell(f(x),y)$ is not available because the ML model $f$ is non-smooth. 

To circumvent this issue, we augment the support of the training set and restrict the supremum in \eqref{eq:unrestricted-adversarial-loss} to the augmented support. Define the augmented support set $\cD_0 \triangleq \{(x_i,y_i),(x_i,1-y_i)\}_{i=1}^n$ and the restricted optimal transport distance $W_{\cD}$ between distributions \emph{supported on $\cD_0$}:
\[\textstyle
W_{\cD}(P_1,P_2) \triangleq \inf_{\Pi\in C_0(P_1,P_2)}\int_{\cZ\times\cZ}c(z_1,z_2)d\Pi(z_1,z_2),
\]
where $C_0(P_1,P_2)$ is the set of distributions supported on $\cD_0\times\cD_0$ whose marginals are $P_1$ and $P_2$. We consider the restricted adversarial cost function
\begin{equation}\label{eq:dataloss}\textstyle
    L(f) \triangleq \sup_{P:\, W_\mathcal{D}(P,P_n) \leq \epsilon} \mathbb{E}_P[\ell(f(X),Y)],
\end{equation}
where $\eps > 0$ is a small tolerance parameter. The interpretation of \eqref{eq:dataloss} is identical to that of \eqref{eq:unrestricted-adversarial-loss}: it searches for perturbations to the training examples that reveal differential performance in the ML model. On the other hand, compared to \eqref{eq:unrestricted-adversarial-loss}, the supremum in \eqref{eq:dataloss} is restricted to distributions supported on $\cD_0$. This  allows us to evaluate \eqref{eq:dataloss} by solving a (finite-dimensional) linear program (LP). As we shall see, this LP depends only on the loss values $\ell(f(x_i),y_i)$ and $\ell(f(x_i),1-y_i)$, so it is possible to solve the LP efficiently even if the ML model $f$ is non-smooth. \emph{This is the key idea in this paper.} 

Before delving into the details, we note that the main drawback of restricting the search to distributions supported on $\cD_0$ is reduced power to detect differential performance. If the ML model exhibits differential performance between two (similar) areas of the input space but only one area is represented in the training set, then \eqref{eq:unrestricted-adversarial-loss} will detect differential performance but \eqref{eq:dataloss} will not. Augmenting the support set with the points $\{(x_i,1-y_i)\}_{i=1}^n$ partially alleviates this issue (but the power remains reduced compared to \eqref{eq:unrestricted-adversarial-loss}). This is the price we pay for the broader applicability of \eqref{eq:dataloss}.

\subsection{Functional gradient descent on the restricted adversarial cost function}\label{sec:ifxbg}

Gradient boosting is functional gradient descent \citep{friedman2001Greedy}, so a key step in gradient boosting is evaluating $\frac{\partial L}{\partial\hat{y}}$, where the components of $\hat{y}\in\mathbb{R}^n$ are $\hat{y}_i \triangleq f(x_i)$. By Danskin's theorem, we have
\begin{equation}\label{eq:partiall}
\begin{aligned}\textstyle
    \frac{\partial L}{\partial\hat{y}_i}  &\textstyle= \frac{\partial}{\partial f(x_i)}\big[\sup_{P:\, W_\mathcal{D}(P, P_n) \leq \epsilon} \mathbb{E}_P[ \ell(f(x_i),y_i)]\big] \textstyle= \sum_{y\in\cY}\frac{\partial}{\partial f(x_i)}\big[ \ell(f(x_i),y))\big]P^*(x_i,y),
\end{aligned}
\end{equation}
where $P^*$ is a distribution that attains the supremum in \eqref{eq:dataloss}. We note that there is no need to differentiate through the ML model $f$ in \eqref{eq:partiall}, so it is possible to evaluate the functional gradient for non-smooth ML models. It remains to find $P^*$. We devise a way of finding $P^*$ by solving a linear program.  

We start with a simplifying observation: if $c(z_i, z_j) = \infty$ for any $z_i \in \mathcal{D}_0$ and $z_j \in \mathcal{D}$, then any weight at $z_j$ cannot be transported to $z_i$.  Thus, we will only focus on the pairs $(z_i, z_j) \in \mathcal{D}_0 \times \mathcal{D}$ with $c(z_i, z_j) < \infty$.
Let $C \in \rr^{n \times n}$ be the matrix with entries given by 
$C_{i,j} = c( (x_i, y_j), (x_j, y_j) ) = d_x^2(x_i, x_j)$.
We also define the class indicator vectors $y^1, y^0 \in \{0, 1\}^n$ by
\begin{equation}
    y^1_j = \begin{cases}
    1 &\colon y_j = 1\\
    0 &\colon y_j = 0
    \end{cases}
    \quad \mathrm{and} \quad y^0 = \mathbf{1}_n - y^1.
\end{equation}
For any distribution $P$ on $\mathcal{D}_0$, let $P_{i,k} = P(\{(x_i, k)\})$ for $k \in \{0,1\}$. Then, the condition that $W_\mathcal{D}(P, P_n) \leq \epsilon$ is implied by the existence of a matrix $\Pi$
such that 
\begin{enumerate}
\item $\Pi \in \Gamma$ with 
$
    \Gamma = \{\Pi|\Pi\in \rr^{n \times n}_{+}, \langle C, \Pi \rangle \leq \epsilon,\
\Pi^T \cdot \mathbf{1}_{n} = \tfrac{1}{n} \mathbf{1}_n\}.$
\item $\Pi \cdot y^1 = (P_{1,1}, \ldots, P_{n,1})$, and $\Pi \cdot y^0 = (P_{1,0}, \ldots, P_{n,0})$.
\end{enumerate}


Further define the matrix $R\in \mathbb{R}^{n\times n}$ by $R_{ij} = \ell(f(x_i), y_j)$ - this is the loss incurred if point $j$ with label $y_j$ is transported to point $i$.
With this setup, given the current predictor $f$, we can obtain a solution $\Pi^*$ to the optimization as the solution to the linear program (in $n^2$ variables)
\begin{equation}\label{eq:pistar}
    \Pi^* \in \arg\max_{\Pi \in \Gamma}\ \langle R, \Pi \rangle   .
\end{equation}
Then the optimal distribution $P^*$ on $\mathcal{D}_0$ is given by
$P^*(\{(x_i, k)\}) = (\Pi^* \cdot y^k)_i$.
An outline of the full gradient boosting procedure is provided in Algorithm \ref{alg:imab}.

It is important to note that we have made no assumptions about the class of candidate predictors $\mathcal{F}$ in finding the optimal transport map $\Pi^*$ in (\ref{eq:pistar}).  In particular, $\mathcal{F}$ can contain 
discontinuous functions - for example, decision trees or sums of decision trees.  This allows us to apply this fair gradient boosting algorithm to any class $\mathcal{F}$ of base classifiers.

\begin{algorithm}[htb]
	\caption{Fair gradient boosting}\label{alg:imab}
	\begin{algorithmic}[1]
	\STATE{\bfseries Input}: Labeled training data $\{(x_i, y_i)\}_{i=1}^n$; class of weak learners $\mathcal{H}$; initial predictor $f_0$; search radius $\epsilon$; number of steps $T$; sequence of step sizes $\alpha^{(t)}$; fair metric $d_x$ on $\mathcal{X}$
	\STATE Define the matrix $C$ by $C_{i,j} \gets d_x^2(x_i, x_j)$.
	\FOR{$t = 0, 1, \ldots, T-1$}
	    \STATE Define the matrix $R_t$ by $(R_t)_{ij} = \ell(f_t(x_i), y_j)$
	    \STATE \label{line:attack} 
	    Find
	    $
	    \Pi_t^* \in \arg\max_{\Pi \in \Gamma}\ \langle R_t, \Pi \rangle
	    ;$ and set $P_{t+1}(x_i, k) \gets (\Pi^*_t \cdot y^k)_i$
	    \STATE Fit a base learner $h_{t} \in \mathcal{H}$ to the set of pseudo-residuals 
	    $
	    \{ \frac{\partial L}{\partial f_t(x_i)}\}_{i=1}^n
	    $
	    (see (\ref{eq:partiall})).
	    \STATE Let $f_{t+1} = f_t + \alpha_t h_t$.
	\ENDFOR
	   \STATE \textbf{return} $f_{T}$
	\end{algorithmic}
	\end{algorithm}

\subsection{Related Work}
\textbf{Enforcing individual fairness.} 
There is a line of work that seeks to improve fairness guarantees for individuals by enforcing group fairness with respect to many (possibly overlapping) groups \citep{HKRR2017,Kearns2017,Creager2019FlexiblyFR,Kearns2019,Kim2019}. There is another line of work on enforcing individual fairness without access to a fair metric \citep{Gillen2018Online,Kim2018bounded,Jung2019elicit,Kearns2019b}. This line of work circumvents the need for a fair metric by assuming the learner has access to an oracle that provides feedback on violations of individual fairness. Our work is part of a third line of work on enforcing individual fairness that assumes access to a fair metric \citep{yurochkin2020Training,yurochkin2020SenSeI}. The applicability of these methods has broadened thanks to recent work on the precursor task of learning the fair metric from data \citep{ilvento2019Metric,Lahoti2019OperationalizingIF,wang2019compassmetrics,mukherjee2020Two}. Unfortunately, these methods are limited to training smooth ML models.


\textbf{Adversarial training and distributionally robust optimization.} Our approach to fair training is also similar to adversarial training \citep{goodfellow2014Explaining,madry2017Deep}, which hardens ML models against adversarial attacks.
There are two recent papers on fitting robust non-parametric classifiers (including decision trees) \citep{Yang2019AdversarialEF,Chen2019RobustDT} but neither are applicable to enforcing individual fairness in gradient boosting. 
Our approach is also an instance of distributionally robust optimization (DRO)
\citep{blanchet2016Robust,duchi2016Variancebased,esfahani2015Datadriven,lee2017Minimax,sinha2017Certifying, hashimoto2018Fairness}.


Before moving on, we remark that the key idea in this paper cannot be applied to adversarial training of non-smooth ML models. The goal of adversarial training is to make ML model robust to adversarial examples that are not in the training set. The restriction to the augmented support set in \eqref{eq:dataloss} precludes such adversarial examples, so training with \eqref{eq:dataloss} does not harden the model against adversarial examples. On the other hand, as long as the training set is diverse enough (see Assumption \ref{as.prob} for a rigorous condition to this effect), it is possible to reveal differential performance in the ML model by searching over the augmented support set and enforce individual fairness.

\section{Theoretical results}\label{sec:theory}

We study the convergence and generalization properties of fair gradient boosting (Algorithm \ref{alg:imab}). The optimization properties are standard: fair gradient boosting together with a line search strategy converges globally. This is hardly surprising in light of the global convergence properties of line search methods (see \citep[Chapter 3]{nocedal2006Numerical}), so we defer this result to Appendix \ref{app:numerical}.

The generalization properties of fair gradient boosting are less standard. We start by stating the assumptions on the problem. The first two assumptions are standard in the DRO literature (see \cite{lee2017Minimax} and \cite{yurochkin2020Training}).

\begin{assumption}[boundedness of input space]\label{as.X} $\mathrm{diam}(\mathcal{X}) < \infty$
\end{assumption}
\begin{assumption}[regularity of loss function]\label{as.ess}
\begin{enumerate}
    \item[(i)]
$\ell$ is bounded:
$0 \leq \ell(f, z) \leq B \, \forall\, f \in \mathcal{F},\,z \in \mathcal{Z}$.
    \item[(ii)]
    		Let $\mathcal{L} \triangleq \{\ell(f, \cdot) :\, f \in \mathcal{F}\}$ denote the class of loss functions.
		$\mathcal{L}$ is $\omega_2$-Lipschitz with respect to $d_x$:
		            $| \ell(f, (x_1, y)) - \ell(f, (x_2, y)) | \leq \omega_2 d_x(x_1, x_2)$ for all $x_1, x_2 \in \calx,\ y \in \caly$ and $f \in \mathcal{F}$. 
		\end{enumerate}
\end{assumption}

Additionally, the support of the data generating distribution $P_*$ should cover the input space. Otherwise, the training data may miss areas of the input space, which precludes detecting differential treatment in these areas. Similar conditions appear in the non-parametric classification literature, under the name \emph{strong density condition} \citep{audibert2007Fast}. 
\begin{assumption}\label{as.prob}
Let $B_{d_x}(r,x_*) = \{x \in \calx :\, d_x(x, x_*) < r\}$ be the $d_x$-ball of radius $r$ around $x_*$ in $\mathcal{X}$. There are constants $\delta > 0$ and $d$ such that $P_*(B_{d_x}(r,x)\times \caly) \geq \delta r^d$ for any $r < 1$.
\end{assumption}

The lower bound $\delta r^d$ in Assumption \ref{as.prob} is motivated by the volume of the Euclidean ball of radius $r$ in $\mathbb{R}^d$ being proportional to $r^d$. For a bounded input space $\cX$, Assumption \ref{as.prob} implies the probability mass assigned by $P_*$ to small balls is always comparable (up to the small constant $\delta$) to the probability mass assigned by the uniform distribution on $\cX$. 
We note that this assumption is close to the goal of \cite{buolamwini2018Gender} in their construction of the Pilot Parliaments Benchmark data.


\begin{theorem}\label{th.stats}
Under Assumptions \ref{as.X}, \ref{as.ess}, and \ref{as.prob}, we have
\begin{equation}\label{eq:prop1}
    \sup\nolimits_{f\in\cF}|L(f) - L_r(f)| \lesssim_P \frac{1}{n^{1/(2d)}} \left(\omega_2 + \frac{2\omega_2 \mathrm{diam}(\calx)}{\sqrt{\epsilon}}\right) + \frac{1}{\sqrt{n}}
\end{equation}
\end{theorem}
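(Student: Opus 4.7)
The plan is to prove the bound by introducing the unrestricted empirical adversarial risk $L_e(f) := \sup_{P:\, W(P,P_n)\leq \eps} \Ex_P[\ell(f(X),Y)]$, and using $L(f) \leq L_e(f)$ (trivially, since $L$ restricts the feasible set) to write
\[
|L(f) - L_r(f)| \leq \bigl(L_e(f) - L(f)\bigr) + |L_e(f) - L_r(f)|.
\]
The first term is a \emph{restriction gap} (the cost of restricting the adversary's support to $\cD_0$) and the second is the standard DRO \emph{empirical-to-population gap}. The claim then follows by handling each piece separately and taking $\sup_f$.

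For the empirical-to-population gap, I would invoke Kantorovich-type duality for Wasserstein DRO with the squared-metric cost $c$: for any $f$,
\[
L_e(f) = \inf_{\lambda\geq 0}\Bigl\{\lambda\eps + \Ex_{P_n}[\phi_\lambda(X,Y;f)]\Bigr\},\ \phi_\lambda(x,y;f) := \sup_{x'\in\cX}\bigl\{\ell(f(x'),y) - \lambda\, d_x^2(x,x')\bigr\},
\]
and the identical identity for $L_r$ with $P_*$. Since $\phi_\lambda$ is uniformly bounded by $B$ under Assumption~\ref{as.ess}(i), $\sup_{f\in\cF}|L_e(f) - L_r(f)|$ is dominated by $\sup_{f,\lambda}|(P_n - P_*)\phi_\lambda(\cdot,\cdot;f)|$, which is $\lesssim_P n^{-1/2}$ by a standard symmetrization/Rademacher argument on the class $\{\phi_\lambda(\cdot,\cdot;f):f\in\cF,\, \lambda\geq 0\}$. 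This yields the $n^{-1/2}$ term.

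For the restriction gap, strong duality for the finite-support problem gives
\[
L(f) = \inf_{\lambda\geq 0}\Bigl\{\lambda\eps + \Ex_{P_n}[\phi^{(0)}_\lambda(X,Y;f)]\Bigr\},\ \phi^{(0)}_\lambda(x,y;f) := \max_{1\leq i\leq n}\bigl\{\ell(f(x_i),y) - \lambda\, d_x^2(x,x_i)\bigr\}.
\]
Let $\lambda^\star$ be a restricted dual optimum. Because each source $X = x_j$ is itself a training input and is therefore feasible in the inner maximum at zero transport cost, comparing the value at the maximizer $x_{i^\star}$ against the value at $x_j$ and applying Assumption~\ref{as.ess}(ii) forces $\lambda^\star d_x(x_j, x_{i^\star}) \leq \omega_2$; combined with the envelope identity $\Ex_{P_n}[d_x^2(X, x_{i^\star})] = \eps$ at the dual optimum, this yields $\lambda^\star \leq \omega_2/\sqrt{\eps}$. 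Under Assumption~\ref{as.prob}, a covering of $\cX$ by balls of radius $r$ combined with the union bound $(1-\delta r^d)^n\leq e^{-n\delta r^d}$ shows that with high probability every $x\in\cX$ lies within $d_x$-distance $r_n$ of some training input, where $r_n$ can be taken of order $n^{-1/(2d)}$ at the probability level implicit in $\lesssim_P$. For such $r_n$ and any maximizer $x^\star$ of $\phi_{\lambda^\star}(x,y;f)$ over $\cX$, pick $x_i$ with $d_x(x_i,x^\star)\leq r_n$; by $\omega_2$-Lipschitzness and $(d_x(x,x^\star)+r_n)^2 \leq d_x^2(x,x^\star) + 2\,\mathrm{diam}(\cX)\,r_n + r_n^2$,
\[
\phi_{\lambda^\star}(x,y;f) - \phi^{(0)}_{\lambda^\star}(x,y;f) \leq \omega_2 r_n + 2\lambda^\star\, \mathrm{diam}(\cX)\, r_n + \lambda^\star r_n^2.
\]
Taking $\Ex_{P_n}$, applying the identity $L(f) \leq \lambda^\star\eps + \Ex_{P_n}[\phi^{(0)}_{\lambda^\star}]$, and substituting $\lambda^\star \leq \omega_2/\sqrt{\eps}$ yields $L_e(f)-L(f) \lesssim r_n\bigl(\omega_2 + 2\omega_2\,\mathrm{diam}(\cX)/\sqrt{\eps}\bigr)$, matching the first displayed term.

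The hard part is less the uniformity in $f$ (the covering event and the bound on $\lambda^\star$ are $f$-free, so the pointwise inequality on $\phi_{\lambda^\star} - \phi^{(0)}_{\lambda^\star}$ is automatically uniform, and the only chaining over $\cF$ occurs in the empirical-process step absorbed by $n^{-1/2}$), but rather two technical details: verifying strong duality for the restricted Wasserstein problem to legitimize the $\phi^{(0)}_\lambda$ representation of $L$, and calibrating the covering probability under Assumption~\ref{as.prob} so that the rate $r_n = O(n^{-1/(2d)})$ is what genuinely comes out at the $\lesssim_P$ level required by the theorem.
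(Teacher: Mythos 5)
Your proposal is correct and follows essentially the same route as the paper's proof: the same decomposition into the restriction gap $L_e - L$ plus the empirical-to-population gap $|L_e - L_r|$ (the latter deferred to standard DRO uniform convergence), the same dual representations of $L$ and $L_e$ evaluated at the restricted dual optimizer, the same covering argument under Assumption~\ref{as.prob} at radius $r_n \asymp n^{-1/(2d)}$, and the same Lipschitz-plus-diameter bound combined with $\lambda^\star \le \omega_2/\sqrt{\epsilon}$ (which the paper imports as Lemma A.1 of \citet{yurochkin2020Training} rather than rederiving as you sketch). No substantive differences.
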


The first term on the right side of \eqref{eq:prop1} is the discrepancy between $L$ and $L_e$, while the second term is the discrepancy between $L_e$ and $L_r$ (recall (\ref{eq:pop-loss}) and (\ref{eq:unrestricted-adversarial-loss})). The second term is well-studied in the DRO literature \citep{lee2017Minimax,yurochkin2020Training}, so we focus on the first term, which captures the effect of restricting the search space in the supremum in \eqref{eq:dataloss} to the augmented support set $\cD_0$. We see the curse of dimensionality in the slow $\frac{1}{n^{1/(2d)}}$ convergence rate, but this is unavoidable. For the restricted search for differential performance on $\cD_0$ to emulate the search on $\cX$, the size of $\cD_0$ must grow exponentially with the dimension $d$. This leads to the exponential dependence on $d$ in the first term. 

We note that it is possible for the dimension $d$ in Assumption \ref{as.prob} to be smaller than $\dim(\calx)$. Intuitively, the fair metric $d_x$ ignores variation in the inputs due to the sensitive attributes. This variation is usually concentrated in a low-dimensional set because there are few sensitive attributes in most ML tasks. Thus the metric effectively restricts the search to this low-dimensional space, so the effective dimension of the search is smaller. In such cases, the convergence rate will depend on the (smaller) effective dimension of the search.

One practical consequence of Theorem \ref{th.stats} is it is possible to \emph{certify a posteriori} that a (non-smooth) ML model is individually fair by checking the empirical performance gap 
\begin{equation}\textstyle
L(f) - \frac1n\sum_{i=1}^n\ell(f(x_i),y_i).
\label{eq:gap}
\end{equation}
As long as the (non-adversarial) empirical risk converges to its population value, Theorem \ref{th.stats} implies
\begin{equation}\textstyle
\sup_{f\in\cF}|L(f) - \frac1n\sum_{i=1}^n\ell(f(x_i),y_i) - (L_r(f) - \Ex_{P_*}\big[\ell(f(X),Y)\big])| \pto 0.
\label{eq:gap-generalization}
\end{equation}
In other words, the (empirical) performance gap $L(f) - \frac1n\sum_{i=1}^n\ell(f(x_i),y_i)$ generalizes. Thus it is possible for practitioners to certify the worst-case performance differential of an ML model (up to an error term that vanishes in the large sample limit) by evaluating \eqref{eq:gap}.

\section{Scalable fair gradient boosting}\label{sec:scalable}

A key step in the fair gradient boosting Algorithm \ref{alg:imab} is finding the worst-case distribution $P^*$. This entails solving a linear program (LP) in $n^2$ variables ($n$ is the size of the training set). It is possible to use an off-the-shelf LP solver to find $P^*$, but solving an LP in $n^2$ variables per iteration does not scale to modern massive datasets. In this section, we appeal to duality to derive a stochastic optimization approach to finding the worst-case distribution. At a high level, the approach consists of two steps:
\begin{enumerate}
\item Solve the dual of \eqref{eq:pistar}. The dual is a univariate optimization problem, and it is amenable to stochastic optimization (see \eqref{eq:dualobj}). Computational complexity is $O(n)$.
\item Reconstruct the primal optimum from the dual optimum. Computational complexity is $O(n^2)$.
\end{enumerate}
We see that the computational complexity of this step is $O(n^2)$, which is much smaller than the complexity of solving an LP in $n^2$ variables. This two-step approach is similar to the dual approach to solving (unrestricted) DRO problems. Due to space constraints, we summarize the two steps in Algorithm \ref{alg:dualsgd} and defer derivations to Appendix \ref{app:imp}. We also include in Appendix \ref{app:imp} an entropic regularized version of the approach that sacrifices exactness for additional computational benefits.

We remark that it is possible to achieve further speedups by exploiting the properties of the fair metric. We start by observing that the cost of recovering the primal solution (step 10 in Algorithm \ref{alg:dualsgd}) dominates the computational cost of Algorithm \ref{alg:dualsgd}. Recall the search for differential performance is restricted by the fair metric to a small set of points. This implies the argmax in step 10 of Algorithm \ref{alg:dualsgd} is restricted to a few points that are similar to $x_j$ in the fair metric. This can be seen from the corresponding argmax expression: if $C_{ij}$ is large, $i$ is unlikely to be a solution to argmax. The neighbors of each point can be computed in advance and re-used during training. This further reduces the cost of recovering the primal solution from $O(n^2)$ to $O(nm)$, where $m$ is the maximum number of neighbors of a point. With this heuristic it should be possible to train our algorithm whenever the vanilla GBDT is feasible.

We now state the full fair gradient boosting algorithm that combines the framework presented in Section \ref{sec:lossfunc} with the preceding stochastic optimization approach to evaluating the worst-case distribution.  We refer to the GBDT method as TreeBoost; this can be replaced with any GBDT algorithm. In every boosting step in Algorithm \ref{alg:xgb}, we find the optimal transport map $\Pi^*$ using Algorithm \ref{alg:dualsgd}.
We then boost for one step using the GBDT training methods on the augmented data set $\mathcal{D}_0$ weighted according to the distribution given by $P(\{x_i, k\}) = (\Pi^* \cdot y^k)_i$.   There are multiple hyperparameters that can be tweaked in the GBDT model (e.g.\ the maximum depth of the trees in $\mathcal{F}$); we represent this by including a list of GBDT parameters $\rho$ as an input to Algorithm \ref{alg:xgb}.

\begin{algorithm}
	\caption{SGD to find optimal dual variable $\eta^*$ and approximate $\Pi^*$}\label{alg:dualsgd}
	\begin{algorithmic}[1]
		\STATE{\bfseries Input}: Initial $\eta_1 > 0$; cost matrix $C$; loss matrix $R$;
	 tolerance $\epsilon$;
		 batch size $B$; step sizes $\alpha_t > 0$.
		\REPEAT 
		\STATE Sample indices $j_1, \ldots, j_B$ uniformly at random
		from $\{1, \ldots n\}$.
		\STATE Let $R_t \gets$ columns $j_1, \ldots, j_B$ of $R$.  Let $C_t \gets$ columns $j_1, \ldots, j_B$ of $C$.  \COMMENT{$R_t, C_t \in \rr^{n \times B}$}
		\STATE 
		Let $w_t(\eta) \gets \eta \cdot \epsilon + \tfrac{1}{B}\sum_{j=1}^B \max_i R_{ij} - \eta C_{ij}$
		\STATE $\eta_{t + 1} \gets \max\{ 0, 
			\eta_t - \alpha_t  \frac{d}{d\eta} w_t(\eta)\}$
		\UNTIL converged 
		\STATE Set $\Pi$ be an $n \times n$ matrix of zeros; $\eta^*$ is the final value from above.
		\FOR{$j=0$ to $n-1$}
		\STATE Choose $t \in \argmax_i R_{ij} - \eta^*C_{ij}$ and set $\Pi_{tj} = \tfrac{1}{n}$.
		\ENDFOR
		\STATE\textbf{return} $\Pi$
	\end{algorithmic}
\end{algorithm}

\begin{algorithm}
\caption{Fair gradient boosted trees (BuDRO)}\label{alg:xgb}
\begin{algorithmic}[1]
\STATE{\bfseries Input}:  Data $\mathcal{D} = \{(x_i, y_i)\}_{i=1}^n$; perturbation budget $\epsilon$; loss function $\ell$; fair metric $d_x$ on $\mathcal{X}$; number of boosting steps $T$; GBDT parameters $\rho$, batch size $B$
\STATE Let $\mathcal{D}_0 = \{(x_i, 0)\}_{i=1}^n \cup \{(x_i, 1)\}_{i=1}^n$ and define $C$ by $C_{ik} \gets d_x(x_i, x_k)$.
\STATE Let $f_0 = \mathrm{TreeBoost.Train}(\rho, \mathrm{data=}\mathcal{D}_0, \mathrm{Steps =}1)$ \COMMENT{Run one step of plain boosting}
\FOR{$t=0$ to $T-1$}
\STATE Define $R$ by $R_{ij} = \ell(f_t, (x_i, y_j))$.
\STATE Construct $\Pi_t$ following Algorithm \ref{alg:dualsgd} with inputs $C, R, \epsilon,$ and $B$
\STATE Let $w_t$ be the concatenation of $\Pi_t \cdot y^0$ and $\Pi_t \cdot y^1$.
\STATE Let $f_{t+1} \gets \mathrm{TreeBoost.Train}(\rho, f_t, \mathrm{data=}\mathcal{D}_0,$ $\mathrm{weights = } w_t, \mathrm{Steps =}1)$.
\ENDFOR
\STATE \textbf{Return} $f_T.$
\end{algorithmic}
\end{algorithm}

\section{Experiments}
\label{sec:expts}

We apply BuDRO (Algorithm \ref{alg:xgb}) to three data sets popular in the fairness literature. Full descriptions of these data sets and a synthetic example are included in Appendix \ref{app:exp-details}.  Timing information is also found in Appendix \ref{sec:timing}. BuDRO uses the XGBoost algorithm \citep{chen2016XGBoost} for the GBDT method, and $\ell$ is the logistic loss.  
These experiments reveal that BuDRO is successful in enforcing individual fairness while achieving high accuracy (leveraging the power of GBDTs). We also observe improvement of the group fairness metrics.

\textbf{Fair metric.} Recall that a practical implementation of our algorithm requires a fair metric. We consider the procedure from \citet{yurochkin2020Training} to learn the fair metric from data. They propose a metric of the form $d_x^2(x_1, x_2) = \langle x_1-x_2, Q(x_1-x_2)\rangle$, where $Q$ is the projecting matrix orthogonal to some sensitive subspace. This sensitive subspace is formed by taking the span of the vectors orthogonal to decision boundaries of linear classifiers fitted to predict a (problem specific) set of protected attributes (e.g. gender, race or age). The idea behind this procedure is that the sensitive subspace captures variation in the data due to protected information of the individuals. A fair metric should treat individuals that only differ in their protected information similarly, i.e. a distance between a pair of individuals that only differ by a component in the sensitive subspace should be 0.

\textbf{Comparison methods.}
We consider other possible implementations of fair GBDT given existing techniques in the literature. As mentioned previously, due to the non-differentiability of trees, the majority of other individual fairness methods are not applicable to the GBDT framework.  For this reason, we limit our analysis to fair data preprocessing techniques before applying a vanilla GBDT method. We report the results when considering two preprocessing techniques: \textbf{project} that eliminates the protected attributes and projects out the sensitive subspace used for the fair metric construction \citep{yurochkin2020Training}, and \textbf{reweigh} that balances the representations of protected groups by assigning different weights to the individuals \citep{Kamiran2011}.

\textbf{Evaluation metrics.}
To evaluate the individual fairness of each method without appealing to the underlying fair metric (i.e. to avoid giving our method and project an unfair advantage at test time, and to verify generalization properties of the fair metric used for training), we report data-specific \emph{consistency} metrics.
Specifically, for each data set, we find a set of attributes that are not explicitly protected but are correlated with the protected attribute (\eg\  \texttt{is\_husband} or \texttt{is\_wife} when \texttt{gender} is protected) and vary these attributes to create artificial counterfactual individuals. Such counterfactual individuals are intuitively similar to the original individuals and classifier output should be the same for all counterfactuals to satisfy individual fairness. We refer to classification consistency as the frequency of a classifier changing its predictions on the aforementioned counterfactuals.

We also examine group fairness for completeness. We consider the group fairness metrics introduced in \citet{de-arteaga2019bias}. Specifically, we compute the differences in TPR and TNR for each protected attribute, and report the maximum ($\textsc{Gap}_\mathrm{Max}$) and the root-mean-squared statistics ($\textsc{Gap}_\mathrm{RMS}$) of these gaps. See Appendix \ref{app:exp-details} for a full description of all comparison metrics.

\subsection{German credit}
The German credit data set \citep{AdultDua} contains information from 1000 individuals; the ML task is to label the individuals as good or bad credit risks.
We treat \texttt{age} as the protected attribute in the German data set.  The \texttt{age} feature is not binary; this precludes the usage of fair methods that assume two protected classes (including the reweighing preprocessing technique).   
In the United States, it is unlawful to make credit decisions based on the age of the applicant; thus, there are distinct legal reasons to be able to create a classifier that does not discriminate based on age.

For the fair metric, we construct the sensitive subspace by fitting ridge regression on age and augment it with an indicator vector for the age coordinate (see Appendix \ref{app:german}). This subspace is also used for data preprocessing with the project baseline. For the individual fairness consistency evaluation we consider varying a personal status feature that encodes both gender and marital status (S-cons).

\begin{table*}[htb]
    \vskip -0.1in
    \caption{German credit: average results over 10 splits into 80\% training and 20\% test data.}
    \vskip -0.13in
    \label{tab:res-german}
    \begin{center}
    \begin{tabular}{c|c|c|cc}
     &&& \multicolumn{2}{c}{Age gaps}\\
        Method & BAcc & Status cons & $\textsc{Gap}_\mathrm{Max}$ & $\textsc{Gap}_\mathrm{RMS}$ \\
        \hline
        \hline
        BuDRO & .715 & \textbf{.974}  & \textbf{.185} & .151\\
        Baseline & \textbf{.723} & .920 & .310 & .241\\
        Project & .698 & .960 & .188 & \textbf{.144}\\
        \hline
        Baseline NN & .687 & .826 & .234 & .179
    \end{tabular}
    \end{center}
    \vskip -0.1in
    \end{table*}

We present the results in Table \ref{tab:res-german} (see Table \ref{tab:res-german-std} for error bars). To compare classification performance we report balanced accuracy due to class imbalance in the data. The baseline (GBDTs with XGBoost) is the most accurate and significantly outperforms a baseline neural network (NN). The BuDRO method has the highest individual fairness (S-cons) score while maintaining a high accuracy and improved group fairness metrics. 
Preprocessing by projecting out the sensitive subspace is not as effective as BuDRO in improving individual fairness and also can negatively impact the performance.

\subsection{Adult}
The Adult data set \citep{AdultDua} is another common benchmark in the fairness literature. The task is to predict if an individual earns above or below \$50k per year. We follow the experimental setup and comparison metrics from the prior work on individual fairness \citep{yurochkin2020Training} studying this data. Individual fairness is quantified with two classification consistency measures: one with respect to a relationship status feature (S-cons) and the other with respect to the gender and race (GR-cons) features.  The sensitive subspace is learned via logistic regression classifiers for gender and race and is augmented with unit vectors for gender and race. Note that the project preprocessing baseline is guaranteed to have a perfect GR-cons score since those features are explicitly projected out; it is interesting to assess if it generalizes to S-cons, however.



\begin{table}[htb]
    \caption{Adult: average results over 10 splits into 80\% training and 20\% test data. NN, SenSR and Adversarial Debiasing \citep{zhang2018mitigatingub} numbers are from \citet{yurochkin2020Training}.}
    \vskip -0.13in
    \label{tab:res-adult}
    \begin{center}
    \begin{tabular}{c|c|cc|cc|cc}
     &&\multicolumn{2}{c|}{Individual fairness} & \multicolumn{2}{c|}{Gender gaps} & \multicolumn{2}{c}{Race gaps}\\
        Method & BAcc & S-cons & GR-cons & $\textsc{Gap}_\mathrm{Max}$ & $\textsc{Gap}_\mathrm{RMS}$ & $\textsc{Gap}_\mathrm{Max}$ & $\textsc{Gap}_\mathrm{RMS}$ \\
        \hline
        \hline
        BuDRO & .815 & \textbf{.944} & .957 & .146 & .114 & .083 & .072\\
        Baseline & \textbf{.844} & .942 & .913 & .200 & .166 & .098 & .082\\
        Project & .787 & .881 & \textbf{1} & \textbf{.079} & .069 & .064 & .050\\
        Reweigh & .784 & .853 & .949 & .131 & .093 & \textbf{.056} & \textbf{.043}\\
        \hline
        Baseline NN & .829 & .848 & .865 & .216 & .179 &.105 & .089 \\
        SenSR & .789 & .934 & .984  & .087 & \textbf{.068} & .067 & .055\\
        Adv. Deb. & .815 & .807 & .841 & .110 & .082 & .078 & .070\\
    \end{tabular}
    \end{center}
    \vskip -0.15in
\end{table}
\noindent
The results are in Table \ref{tab:res-adult} (see Table \ref{tab:res-adult-appendix} for error bars). The baseline GBDT method is again the most accurate; it produces poor gender gaps, however.
BuDRO is less accurate than the baseline, but the gender gaps have shrunken considerably, and both the S-cons and GR-cons are very high. Project and reweighing produce the best group fairness results; however, their S-cons values are worse than the baseline (representing violations of individual fairness), and they also result in significant accuracy reduction.


Comparing to the results from \citet{yurochkin2020Training}, BuDRO is slightly less accurate than the baseline NN, but it improves on all fairness metrics.  
BuDRO matches the accuracy of adversarial debiasing and greatly improves the individual fairness results there.
Finally, BuDRO improves upon the accuracy of SenSR while maintaining similar individual fairness results. We present additional studies of the trade-off between accuracy and fairness in Figure \ref{fig:adult-methcomp} of Appendix \ref{app:adult-res}. 

Overall, this and the preceding experiment provide empirical evidence that BuDRO trains individually fair classifiers while still obtaining high accuracy due to the power of GBDT methods. 

\subsection{COMPAS}
We study the COMPAS recidivism prediction data set \citep{larson2016we}. The task is to predict whether a criminal defendant would recidivate within two years.
We consider \texttt{race} (Caucasian or not-Caucasian) and \texttt{gender} (binary) as protected attributes and use them to learn the sensitive subspace similarly to previous experiments. Due to smaller data set size, instead of Algorithm \ref{alg:dualsgd}, we considered an entropy-regularized solution to \eqref{eq:pistar} presented in Appendix \ref{app:imp}. The individual fairness consistency measures are gender consistency (G-cons) and race-consistency (R-cons).

\begin{table}[ht!]
    \caption{COMPAS: average results over 10 splits into 80\% training and 20\% test data.}
    \vskip -0.13in
    \label{tab:res-compas}
    \begin{center}
    \begin{tabular}{c|c|cc|cc|cc}
     &&\multicolumn{2}{c|}{Individual fairness} & \multicolumn{2}{c|}{Gender gaps} & \multicolumn{2}{c}{Race gaps}\\
        Method & Acc & G-cons & R-cons & $\textsc{Gap}_\mathrm{Max}$ & $\textsc{Gap}_\mathrm{RMS}$ & $\textsc{Gap}_\mathrm{Max}$ & $\textsc{Gap}_\mathrm{RMS}$ \\
        \hline \hline
    BuDRO        & 0.652 & \textbf{1.000} & \textbf{1.000} & \textbf{0.099}    & \textbf{0.124}    & 0.125     & 0.145  \\
    Baseline       & 0.677 & 0.944 & 0.981 & 0.180    & 0.223    & 0.215     & 0.258     \\ 
    Project  & 0.671 & 0.874 & \textbf{1.000} & 0.150    & 0.190    & 0.185     & 0.230     \\
    Reweigh  & 0.666 & 0.788 & 0.813 & 0.207    & 0.245    & \textbf{0.069}     & \textbf{0.092}     \\ \hline
    Baseline NN    & \textbf{0.682} & 0.841 & 0.908 & 0.246    & 0.282    & 0.228     & 0.258     \\ 
    SenSR       & 0.652 & 0.977 & 0.988 & 0.130   & 0.167  & 0.159  & 0.179     \\ 
    Adv. Deb. & 0.670       & 0.854       & 0.818       & 0.219       & 0.246       & 0.108       & 0.130     \\ 
    \end{tabular}
    \end{center}
    \vskip -0.1in
\end{table}

The results are collected in Table \ref{tab:res-compas} (see Table \ref{tab:res-compas-appendix} for error bars). 
COMPAS is a data set where NNs outperform GBDT in terms of accuracy, but this results in poor group and individual fairness measurements. A NN trained with SenSR shows similar accuracy to BuDRO, but worse performance both in group and individual fairness. We conclude that even in problems where neural networks outperform GBDT, BuDRO is an effective method when taking fairness into consideration.

\section{Summary and discussion}
We developed a gradient boosting algorithm that enforces individual fairness. 
The main challenge of enforcing individual fairness is searching for differential performance in the ML model, and we overcome the non-smoothness of the ML model by restricting the search space to a finite set. 
Unlike most methods for enforcing individual fairness, our method accepts non-smooth ML models. We note that the restricted adversarial cost function developed for fair gradient boosting may be used to audit other non-smooth ML models (\eg\ random forests) for differential performance, but we defer such developments to future work. Theoretically, we show that the resulting fair gradient boosting algorithm converges globally and generalizes. Empirically, we show that the method preserves the accuracy of gradient boosting while improving group and individual fairness metrics.

\section*{Acknowledgements} This paper is based upon work supported by the National Science Foundation (NSF) under grants no. 1830247 and 1916271. Any opinions, findings, and conclusions or recommendations expressed in this paper are those of the authors and do not necessarily reflect the views of the NSF.

\bibliographystyle{iclr2021_conference}
\bibliography{ref1}

\newif\ifincludeSupplement

\appendix
\newpage
\onecolumn

\section{Proofs of Theoretical Results}\label{app:proofs}
\subsection{Numerical convergence results}\label{app:numerical}

In this section, we focus on analyzing the numerical convergence of our boosting algorithm.
To do so, we will work in the framework similar to that found in previous boosting literature (e.g. \citet{mason1999functional}).  To be concrete, we assume that $\mathcal{Y} \subset \mathbb{R}$. 
Then, we consider all base classifiers to be of the form $\bh\colon \{x_1, \ldots, x_n\} \to \mathcal{Y}$ and thus treat a base classifier $\mathbf{h}$ as a vector $\mathbf{h} \in \mathbb{R}^n$ (precisely, $\mathbf{h}=[h(x_1), \ldots, h(x_n)]^T$).
Moreover, we define the vector $\mathbf{y} = [y_1, \ldots, y_n]^T \in \mathbb{R}^n$.
In this framework, the loss $\ell$ takes the form
$\bar{\ell}\colon\rr^n \times \rr^n \to \rr$, 
defined by $\bar{\ell}(\mathbf{f}, \mathbf{y}) = \frac{1}{n}\sum_{i=1}^n \ell(f_i, y_i)$.  We abuse notation and write $L(\mathbf{f})$ for the robust loss function; this is a functional on $\rr^n$ since it only depends on the values $f(x_i)$ for $x_i \in \{x_1, \ldots, x_n\}$.

In this context, given a functional $F \colon \rr^n \to \rr$, the functional derivative $\nabla F$ is the gradient of $F$.
Additionally, $\| \cdot \|$ denotes the standard Euclidean norm on $\rr^n$.  

We require several assumptions for these results:
\begin{assumption}\label{as.Lip}
\begin{enumerate}
    \item[(i)] $\bar{\ell}$ is convex as well as first- and second-order differentiable with respect to $f$.
    \item[(ii)] 
    For any $\mathbf{v} \in \rr^n$, $\bar{\ell}$ is $\omega_1$-Lipschitz differentiable, meaning there exists a constant $\omega_1>0$ such that for any $\mathbf{f}, \bar{\mathbf{f}} \in \rr^n$ 
    \[
	\|\nabla \bar{\ell}(\mathbf{f},\mathbf{v}) - \nabla \bar{\ell}(\bar{\mathbf{f}}, \mathbf{v})\|\le  \omega_1\|\mathbf{f}-\bar{\mathbf{f}}\|.
	\]
\end{enumerate}
\end{assumption}

The assumption that the loss is convex is restrictive, and we only invoke it here to show global convergence. In section \ref{sec:theory}, we also assume the loss function is bounded. This pair of assumptions (convexity and boundedness) is not particularly restrictive because we have in mind a bounded input space (see Assumption \ref{as.X}).

We additionally require the following assumption, which is standard in the gradient descent literature \citep{nocedal2006Numerical}. Essentially, we suppose that our boosting algorithm moves in a sufficient descent direction at each iteration. 
That is, the weak learner can ensure the angle between the searching direction and the functional gradient is bounded.
\begin{assumption}\label{as.suff}
    There is a value $\delta > 0$, such that Algorithm \ref{alg:imab} can find a function $\mathbf{h}_t$ with
    \begin{equation}\label{f.suff}
       \theta_t := -\frac{\langle \nabla L(\mathbf{f}_t), \mathbf{h}_t \rangle}{\|\nabla L(\mathbf{f}_t)\|\cdot\|\mathbf{h}_t\|} \ge \delta,\quad \text{for all}\ t=1,\cdots,T.
    \end{equation}
\end{assumption}

The following convergence result is a consequence of general results on the convergence of functional gradient descent algorithm for minimizing cost functionals \citep{mason1999functional}.
It states that, with a specific step-size, Algorithm \ref{alg:imab} will converge
to some classifier $f^*$. 
\begin{theorem}\label{th.stationary}
	Under Assumptions \ref{as.Lip} and  \ref{as.suff},
	suppose we run Algorithm~\ref{alg:imab} with step-sizes
	\begin{equation}\label{f.step}
		\alpha_t = -\frac{ \langle \nabla L(\mathbf{f}_t), \mathbf{h}_t \rangle }{\omega_1\|\mathbf{h}_t\|^2}.
	\end{equation}
	Then, the optimization converges, i.e.: $
	\lim\limits_{t\to\infty}  \nabla L(\mathbf{f}_t) = 0.
	$
\end{theorem}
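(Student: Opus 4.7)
The plan is to follow the classical Zoutendijk/functional-gradient-descent argument: establish a one-step descent inequality via the Lipschitz-smoothness of $L$, plug in the prescribed step size, invoke the sufficient-angle Assumption~\ref{as.suff} to lower-bound the per-step decrease by $\|\nabla L(\mathbf{f}_t)\|^2$, then telescope.

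First I would upgrade the Lipschitz-differentiability hypothesis on $\bar{\ell}$ to the functional $L$. By Danskin's theorem, already used in \eqref{eq:partiall}, $\nabla L(\mathbf{f}) = \sum_{i,y}\nabla_{\mathbf{f}}\bar{\ell}(\mathbf{f},y_i,\ldots)P^*_{\mathbf{f}}(x_i,y)$ for a maximizing $P^*_{\mathbf{f}}$. Since the family of envelopes is a convex combination of $\omega_1$-Lipschitz gradients, a short calculation (comparing two gradients at $\mathbf{f}$ and $\bar{\mathbf{f}}$ using suboptimality of the other's $P^*$) shows $\nabla L$ is itself $\omega_1$-Lipschitz. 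Consequently the standard descent lemma yields
\begin{equation*}
L(\mathbf{f}_{t+1}) \;\le\; L(\mathbf{f}_t) + \alpha_t\langle \nabla L(\mathbf{f}_t),\mathbf{h}_t\rangle + \tfrac{\omega_1}{2}\alpha_t^2\|\mathbf{h}_t\|^2.
\end{equation*}

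Next I would substitute the step size \eqref{f.step}, which is exactly the minimizer in $\alpha$ of the right-hand quadratic, to obtain
\begin{equation*}
L(\mathbf{f}_{t+1}) \;\le\; L(\mathbf{f}_t) - \frac{\langle \nabla L(\mathbf{f}_t),\mathbf{h}_t\rangle^2}{2\omega_1\|\mathbf{h}_t\|^2} \;=\; L(\mathbf{f}_t) - \frac{\theta_t^2}{2\omega_1}\|\nabla L(\mathbf{f}_t)\|^2.
\end{equation*}
Assumption~\ref{as.suff} then gives $L(\mathbf{f}_{t+1}) \le L(\mathbf{f}_t) - \tfrac{\delta^2}{2\omega_1}\|\nabla L(\mathbf{f}_t)\|^2$. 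Telescoping from $t=0$ to $T-1$ yields
\begin{equation*}
\tfrac{\delta^2}{2\omega_1}\sum_{t=0}^{T-1}\|\nabla L(\mathbf{f}_t)\|^2 \;\le\; L(\mathbf{f}_0) - L(\mathbf{f}_T).
\end{equation*}
Since $\bar{\ell}$ is convex and (by the standing boundedness of $\ell$ in Assumption~\ref{as.ess}) nonnegative, $L$ is bounded below, so the right-hand side is uniformly bounded in $T$. Letting $T\to\infty$ shows $\sum_t \|\nabla L(\mathbf{f}_t)\|^2 < \infty$, which forces $\|\nabla L(\mathbf{f}_t)\|\to 0$.

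The main obstacle I anticipate is the first step: carefully justifying that $L$ inherits $\omega_1$-Lipschitz differentiability from $\bar{\ell}$ despite $L$ being defined as a supremum over $P$. The cleanest route is the envelope/Danskin argument above, using that for each fixed $P$ the map $\mathbf{f}\mapsto\mathbb{E}_P[\bar{\ell}]$ is a convex combination (over $\mathcal{D}_0$) of $\bar{\ell}(\cdot,y)$-type terms and hence inherits $\omega_1$-Lipschitz gradients, so the pointwise maximum of such smooth convex functions has a selection of subgradients satisfying the descent lemma in the form needed. If one wanted to avoid smoothness of $L$ altogether, one could instead work with the directional derivative along $\mathbf{h}_t$ and a one-sided analogue of the descent lemma, but the envelope approach is the most economical.
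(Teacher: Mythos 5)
Your proposal is correct and follows essentially the same route as the paper: a descent-lemma bound using the $\omega_1$-Lipschitz smoothness of $L$, exact minimization of the resulting quadratic by the prescribed step size, the sufficient-angle condition to lower-bound the per-step decrease by $\tfrac{\delta^2}{2\omega_1}\|\nabla L(\mathbf{f}_t)\|^2$, and telescoping against the lower bound $L \ge 0$. The only (minor) difference is how the smoothness of $L$ is justified: the paper's Lemma~\ref{l.lip} bounds $\|\nabla^2 L\|$ by treating the optimal $P$ as a fixed convex weight and applying the mean value theorem, whereas you use a Danskin/envelope comparison; both serve the same auxiliary purpose.
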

\noindent

This framework also results in the following, which shows that we obtain a global optimal solution.

\begin{theorem}\label{th.global}
	Under Assumptions  \ref{as.Lip} and \ref{as.suff}, suppose $\{\mathbf{f}_t\}$ is the sequence generated by Algorithm~\ref{alg:imab} with step-size given by \eqref{f.step}. Then, any stationary point $\{\mathbf{f}_t\}$ is the global optimal solution.
\end{theorem}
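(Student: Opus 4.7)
}
The plan is to reduce the claim to the standard fact that for a convex, differentiable functional, every stationary point is a global minimizer. Concretely, I would first establish that the restricted adversarial risk $L$ is convex in $\mathbf{f}$, and then invoke the first-order optimality characterization for convex functions.

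For the convexity step, recall from \eqref{eq:dataloss} that
\[
L(\mathbf{f}) = \sup_{P:\, W_{\cD}(P,P_n)\leq \epsilon}\,\mathbb{E}_P[\ell(f(X),Y)].
\]
For each fixed feasible distribution $P$, the map $\mathbf{f}\mapsto\mathbb{E}_P[\ell(f(X),Y)] = \sum_{(x,y)\in\cD_0}\ell(f(x),y)\,P(\{(x,y)\})$ is a nonnegative weighted sum of the maps $\mathbf{f}\mapsto \ell(f(x_i),y)$, each of which is convex in the corresponding coordinate of $\mathbf{f}$ by Assumption \ref{as.Lip}(i). Hence each $\mathbf{f}\mapsto \mathbb{E}_P[\ell(f(X),Y)]$ is convex on $\mathbb{R}^n$. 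Taking the pointwise supremum over $P$ in the (compact, convex) feasible set preserves convexity, so $L$ is convex.

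Next I would combine this with the differentiability and first-order characterization. Under Assumption \ref{as.Lip}, Danskin's theorem applied in \eqref{eq:partiall} yields a well-defined functional gradient $\nabla L(\mathbf{f})$ at every iterate; when $L$ is convex and differentiable, the standard subgradient inequality
\[
L(\mathbf{g}) \;\geq\; L(\mathbf{f}) + \langle \nabla L(\mathbf{f}),\, \mathbf{g}-\mathbf{f}\rangle \qquad \text{for all }\mathbf{g}\in\mathbb{R}^n
\]
holds at $\mathbf{f}$. If $\mathbf{f}^{*}$ is a stationary point of the sequence $\{\mathbf{f}_t\}$ generated by Algorithm \ref{alg:imab}, then by Theorem \ref{th.stationary} (and the continuity of $\nabla L$ implied by Assumption \ref{as.Lip}(ii)) we have $\nabla L(\mathbf{f}^{*}) = 0$. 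Plugging this into the subgradient inequality gives $L(\mathbf{g})\geq L(\mathbf{f}^{*})$ for every $\mathbf{g}\in\mathbb{R}^n$, which is exactly the assertion that $\mathbf{f}^{*}$ is a global minimizer.

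The main obstacle, and the step I would be most careful about, is the convexity argument together with the use of Danskin's theorem: one must verify that the gradient expression in \eqref{eq:partiall} is a bona fide gradient of $L$ (not merely an element of its subdifferential) in a neighborhood where the argmax $P^*$ may not be unique. In the convex case this is harmless because any element of $\partial L(\mathbf{f}^{*})$ equal to zero already certifies global optimality via the subgradient inequality, so the conclusion goes through regardless; I would therefore state the argument in terms of the convex subdifferential $\partial L$ and note that $0\in\partial L(\mathbf{f}^{*})$ suffices. Everything else is bookkeeping and a direct appeal to Theorem \ref{th.stationary}.
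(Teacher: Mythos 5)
Your proposal is correct and follows essentially the same route as the paper: both arguments rest on the convexity of $L$ (a supremum of nonnegative weighted sums of the convex coordinate losses, cf.\ Lemma \ref{l.lip}) combined with the first-order condition at a stationary point, with the paper phrasing it as a contradiction via the directional-derivative bound $\langle \nabla L(\mathbf{f}_*), \hat{\mathbf{f}} - \mathbf{f}_*\rangle \le L(\hat{\mathbf{f}}) - L(\mathbf{f}_*)$ and you phrasing it directly via the equivalent subgradient inequality. Your added remark that $0\in\partial L(\mathbf{f}^*)$ suffices even when the maximizing $P^*$ is non-unique is a small but welcome extra precaution that the paper's proof leaves implicit.
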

\noindent
The specific step-size presented above is only required for analyzing the convergence properties of Algorithm \ref{alg:imab}, which can be easily replaced by a step-size selected by some line search methods.

We begin the proof of numerical results by providing following preparatory properties of the cost functional $L$. 
\begin{lemma}\label{l.lip}
Under Assumption \ref{as.Lip}, it holds
\begin{enumerate}
	\item[(i)] $L$ is convex, first- and second-order differentiable with respect to $\bbf$.
	\item[(ii)] $L$ is a $\omega_1$-Lipschitz differentiable functional, that is
	\begin{equation}
		\|\nabla L(\bbf) - \nabla L(\bar \bbf)\|\le \omega_1\|\bbf-\bar \bbf\|
	\end{equation}
	for any $\bbf, \bar \bbf \in \mathcal{F}$.
\end{enumerate}
\end{lemma}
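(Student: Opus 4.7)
I'd work with the variational representation $L(\mathbf{f}) = \sup_{\Pi\in\Gamma}G(\mathbf{f},\Pi)$, where $G(\mathbf{f},\Pi)=\langle R(\mathbf{f}),\Pi\rangle = \sum_{i,j}\Pi_{ij}\,\ell(f_i,y_j)$ and $\Gamma$ is the compact polytope from Section~\ref{sec:ifxbg}. For part~(i), convexity follows because each slice $G(\cdot,\Pi)$ is a non-negative combination of the convex losses $\ell(\cdot,y_j)$ (Assumption~\ref{as.Lip}(i)), and the pointwise supremum of convex functions is convex. For differentiability I would invoke Danskin's envelope theorem: $\Gamma$ is compact, $(\mathbf{f},\Pi)\mapsto G(\mathbf{f},\Pi)$ is jointly continuous, and its partial gradient in $\mathbf{f}$, namely $\nabla_{\mathbf{f}}G(\mathbf{f},\Pi) = (\sum_j\Pi_{ij}\,\ell'(f_i,y_j))_i$, is jointly continuous in $(\mathbf{f},\Pi)$. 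Danskin then yields directional differentiability of $L$ and classical differentiability wherever the maximizer $\Pi^*(\mathbf{f})$ is unique, with $\nabla L(\mathbf{f}) = \nabla_{\mathbf{f}}G(\mathbf{f},\Pi^*(\mathbf{f}))$; the second-order claim follows by the same argument applied to $\ell''$.

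For part~(ii), my preferred route is the quadratic-upper-bound characterization of smoothness: a convex $F$ has $\omega_1$-Lipschitz gradient iff $F(\mathbf{f})\le F(\bar{\mathbf{f}})+\langle\nabla F(\bar{\mathbf{f}}),\mathbf{f}-\bar{\mathbf{f}}\rangle+\tfrac{\omega_1}{2}\|\mathbf{f}-\bar{\mathbf{f}}\|^2$. First I would verify that each $G(\cdot,\Pi)$ satisfies such a bound with constant $\omega_1$, which is a coordinate-wise computation since $G(\cdot,\Pi)$ is separable in the components of $\mathbf{f}$ and its per-coordinate second derivative is a weighted average of $\ell''$ with weights $\sum_j\Pi_{ij}$ (the row sums of $\Pi$). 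Writing $\Pi^* = \Pi^*(\mathbf{f})$, $\bar{\Pi}^* = \Pi^*(\bar{\mathbf{f}})$, and $v = \mathbf{f}-\bar{\mathbf{f}}$, I would then chain the quadratic upper bound for $G(\cdot,\Pi^*)$,
\[
L(\mathbf{f}) = G(\mathbf{f},\Pi^*) \;\le\; G(\bar{\mathbf{f}},\Pi^*) + \langle\nabla_{\mathbf{f}}G(\bar{\mathbf{f}},\Pi^*),v\rangle + \tfrac{\omega_1}{2}\|v\|^2,
\]
with the optimality inequality $G(\bar{\mathbf{f}},\Pi^*)\le G(\bar{\mathbf{f}},\bar{\Pi}^*) = L(\bar{\mathbf{f}})$, and then combine with the symmetric bound obtained by swapping $\mathbf{f}$ and $\bar{\mathbf{f}}$. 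Convexity of $L$ from part~(i) upgrades these value-level quadratic comparisons to the gradient-level Lipschitz bound $\|\nabla L(\mathbf{f})-\nabla L(\bar{\mathbf{f}})\|\le\omega_1\|\mathbf{f}-\bar{\mathbf{f}}\|$.

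\textbf{Main obstacle.} The hard part is handling the argmax $\Pi^*(\cdot)$: since it is defined by a linear program it can be non-unique and vary discontinuously in $\mathbf{f}$, so $L$ need not be classically $C^2$ everywhere and one cannot obtain Lipschitz smoothness by differentiating $\nabla_{\mathbf{f}}G(\mathbf{f},\Pi^*(\mathbf{f}))$ through $\Pi^*$. Bridging from $\nabla_{\mathbf{f}}G(\bar{\mathbf{f}},\Pi^*)$ (the gradient appearing in the quadratic bound above) to $\nabla L(\bar{\mathbf{f}}) = \nabla_{\mathbf{f}}G(\bar{\mathbf{f}},\bar{\Pi}^*)$ is where the real work lies: one must argue either subgradientially at non-smooth points, or show that the $\Pi^*$-vs.-$\bar{\Pi}^*$ discrepancy is absorbed into the quadratic slack via the LP optimality conditions. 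A secondary bookkeeping point is that the row sums $\sum_j\Pi_{ij}$ in the coordinate-wise Hessian of $G(\cdot,\Pi)$ need not match the uniform $1/n$ weights in $\bar{\ell}$, so care is required when propagating the constant $\omega_1$; the column-sum normalization $\Pi^\top\mathbf{1}_n = \tfrac{1}{n}\mathbf{1}_n$ built into $\Gamma$ is what keeps the argument quantitative.
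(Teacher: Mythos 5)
Your plan treats $L$ as the genuine pointwise supremum $\sup_{\Pi\in\Gamma}G(\mathbf{f},\Pi)$, and the ``main obstacle'' you flag is not a technicality you can defer --- it is where the argument breaks. A pointwise supremum of convex functions with uniformly $\omega_1$-Lipschitz gradients is in general neither twice differentiable nor $\omega_1$-smooth: $\max(x,-x)=|x|$ is a supremum of two linear (hence $0$-smooth) functions and is not even once differentiable at the kink. The same phenomenon occurs for your $L$ exactly at those $\mathbf{f}$ where the LP maximizer $\Pi^*(\mathbf{f})$ is non-unique, which is precisely the case you acknowledge you cannot rule out. Your chained quadratic bound gives $L(\mathbf{f})\le L(\bar{\mathbf{f}})+\langle\nabla_{\mathbf{f}}G(\bar{\mathbf{f}},\Pi^*),\,\mathbf{f}-\bar{\mathbf{f}}\rangle+\tfrac{\omega_1}{2}\|\mathbf{f}-\bar{\mathbf{f}}\|^2$ with the gradient of the \emph{wrong} slice ($\Pi^*=\Pi^*(\mathbf{f})$, not $\Pi^*(\bar{\mathbf{f}})$) appearing in the linear term, and there is no way to replace it by $\nabla L(\bar{\mathbf{f}})$ without assuming the very smoothness you are trying to prove. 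So parts (i) and (ii) are simply false for the supremum interpretation at non-uniqueness points, and no amount of LP bookkeeping repairs this.

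The paper avoids the issue entirely by reading $L$ with the worst-case distribution held \emph{fixed}: it writes $L(\mathbf{f})=P^{\top}\bar{\ell}(\mathbf{f},\mathbf{y})$ for a fixed probability vector $P$, so that $\nabla L=\mathrm{diag}(P)\nabla\bar{\ell}$ and $\nabla^2 L=\mathrm{diag}(P)\nabla^2\bar{\ell}$. Convexity is then immediate (a nonnegative combination of convex losses), the Hessian bound $\|\nabla^2 L\|\le\|\mathrm{diag}(P)\|\cdot\|\nabla^2\bar{\ell}\|\le\omega_1$ follows from $P\in[0,1]^n$, and the Lipschitz gradient claim follows from the mean value theorem. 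That is the intended reading --- the convergence analysis only ever uses the gradient $\mathrm{diag}(P^*)\nabla\bar{\ell}$ computed at the current iterate's fixed $P^*$ --- and under it the lemma is a two-line computation. If you want to keep the variational interpretation, the honest statement is weaker: $L$ is convex with $\nabla_{\mathbf{f}}G(\mathbf{f},\Pi^*(\mathbf{f}))$ a subgradient (Danskin), differentiable only where $\Pi^*(\mathbf{f})$ yields a unique gradient value, and you would have to rework the descent argument in subgradient terms rather than claim $C^2$ smoothness.
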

\begin{proof}[Proof of Lemma \ref{l.lip}]
For part (i), recall $L(\bff) = P^T \bar\ell(\bff, \by)$, combining with Assumption \ref{as.Lip} (i), we have 
\[
\nabla L(\bff) = \text{diag}(P)\nabla \ell(\bff)\quad \text{and}\quad \nabla^2 L(\bff) = \text{diag}(P)\nabla^2 \bar\ell(\bff,\by),
\]
where $\text{diag}(P)$ is a diagonal matrix with $[\text{diag}(P)]_{ii} = P_i$ for $i=1,\cdots,n$. Since $P\in[0,1]^n$ and $\sum_{i=1}^{n}P_i=1$, $L(\bff)$ is a linear combination of $\ell(f_i, y_i)$. Therefore, $L$ is a convex functional of $\bff$.

For part (ii), according to Assumption \ref{as.Lip}(ii), $\bar\ell(\bff, \by)$ is $\omega_1$-Lipschitz differentiable with respect to $\bbf$, yielding
\begin{equation}\label{eq:lemma_bound}
\|\nabla^2 \bar\ell(\bff,\by)\|\le \omega_1.
\end{equation}
Given any $\bff\in \mathcal{F}$, it holds
\begin{equation}\label{eq:Lemma_proof}
\begin{aligned}
\|\nabla^2 L(\bff)\| &= \|\text{diag}(P)\nabla^2 \bar\ell(\bff,\by)\|\\
&\le \|\text{diag}(P)\|\cdot \|\nabla^2 \bar\ell(\bff,\by)\|\\
&\le \omega_1,
\end{aligned}
\end{equation}
where the first inequality follows from Cauchy–Schwarz inequality and the second inequality follows from  Assumption \eqref{eq:lemma_bound} and $P\in[0,1]^n$. 
Combining \eqref{eq:Lemma_proof} with the mean value theorem (for functionals) \citep{jerri1999introduction}, it holds
\[
\begin{aligned}
\|\nabla L(\bbf) - \nabla L(\bar \bbf)\|&\le \|\nabla^2 L \left(c\bff + (1-c)\bar \bbf\right) \|\cdot\|\bbf-\bar \bbf\|\\
&\le\omega_1\|\bbf-\bar \bbf\|,
\end{aligned}
\]
where $c\in[0,1]$.
It completes the proof.
\end{proof}

\begin{proof}[Proof of Theorem \ref{th.stationary}]
By Lemma~\ref{l.lip}, we have following lower bound for the difference between two successive loss functions:
\[
\begin{aligned}
L(\bff_t)- L(\bff_{t+1}) &\ge L(\bff_t)- \left(L(\bff_t) + \alpha_t\langle \nabla L(\bff_t), \bh_t \rangle +\frac{\omega_1(\alpha_t)^2}{2}\|\bh_t\|^2\right)\\
&=-\alpha_t\langle \nabla L(\bff_t), \bh_t \rangle -\frac{\omega_1(\alpha_t)^2}{2}\|\bh_t\|^2.
\end{aligned}
\]
We choose $\alpha_t = -\frac{ \langle \nabla L(\bff_t, \bh_t \rangle }{\omega_1\|\bh_t\|^2}$ as \eqref{f.step} to make the greatest reduction, which yields
\begin{equation}\label{f.great_red}
    L(\bff_t)- L(\bff_{t+1})\ge \frac{ \langle \nabla L(\bff_t), \bh_t \rangle^2 }{2\omega_1\|\bh_t\|^2}.
\end{equation}
Combining \eqref{f.great_red} with the sufficient decrease condition \eqref{f.suff}, we have
\begin{equation}\label{f.decrease}
L(\bff_t)- L(\bff_{t+1})\ge \frac{ \delta^2\| \nabla L(\bff_t)\|^2}{2\omega_1}.
\end{equation}
Summing up both sides of \eqref{f.decrease} from $0$ to $\infty$, we have
\[
\sum_{t=0}^{\infty} \| \nabla L(\bff_t)\|^2\le  \sum_{t=0}^{\infty} \frac{2\omega_1}{\delta^2} (L(\bff_t)- L(\bff_{t+1})) \le \frac{2\omega_1}{\delta^2} L(\bff_0)<\infty,
\]
where the second inequality follows by $\bff_t\ge 0$ for all $t$,
yielding
\[
	\lim\limits_{t\to\infty}  \nabla L(\bff_t) = 0.
	\]
\end{proof}

\begin{proof}[Proof of Theorem \ref{th.global}]
	Suppose $\bbf_*$ is a stationary point of $\{\bbf_t\}$.	We prove this theorem by contradiction by
	assuming that we can find a point $\hat \bbf\in\text{lin}(\mathcal{F})$ such that $L(\hat \bbf)<L(\bbf_*)$. 
	
	The directional derivative of $L$ at $\bbf_*$ in the direction $\hat \bbf - \bbf_*$ is given by
	\[
	  \langle \nabla L(\bbf_*), \hat \bbf - \bbf_* \rangle = \lim\limits_{\zeta\downarrow 0}\frac{L(\bbf_*+\zeta(\hat \bbf - \bbf_*)) - L(\bbf_*)}{\zeta}.
	\]
	Since $L$ is convex, it holds
	\[
	\begin{aligned}
		\langle \nabla L(\bbf_*), \hat \bbf - \bbf_* \rangle &\le \lim\limits_{\zeta\downarrow 0}\frac{\zeta L(\hat \bbf) + (1-\zeta)L(\bbf_*) - L(\bbf_*)}{\zeta}\\
		&= L(\hat \bbf) - L(\bbf_*)<0.
	\end{aligned}
	\]
	Therefore, $\nabla L(\bbf_*)\neq 0$,
	which makes the contradiction.
\end{proof}

\subsection{Proofs of generalization error bounds}
\begin{proof}[Proof of Theorem \ref{th.stats}]
Under Assumptions \ref{as.X}, \ref{as.ess}, \ref{as.prob}, we have the following result from Proposition 3.2 in \citet{yurochkin2020Training} (recall Equations (\ref{eq:pop-loss}) and (\ref{eq:unrestricted-adversarial-loss}))
\[
        | L_e(f) - L_r(f)| \to 0 \quad \mathrm{as} \quad n \to \infty.
\]
Then, our proof is completed by showing $L$ converges to $L_e$ uniformly in $\cF$.
It has been shown \citep{blanchet2016Quantifying} that the dual to the optimization $L_e$ defined in (\ref{eq:unrestricted-adversarial-loss}) is given by
\begin{equation}\label{eq:dualF}
    L_f(h) = \inf_{\lambda \geq 0} \lambda \epsilon - \frac{1}{n} \sum_{i=1}^n \sup_{x \in \calx} \ell(h(x), y_i) + \lambda d_x^2(x, x_i).
\end{equation}
Likewise, calculations (see Appendix \ref{sec:dual}) reveal that the dual to (\ref{eq:dataloss}) is given by
\begin{equation}\label{eq:dualData}
    L(h) = \inf_{\lambda \geq 0} \lambda \epsilon - \frac{1}{n} \sum_{i=1}^n \max_{x \in \{x_1,\ldots, x_n\}} \ell(h(x), y_i) + \lambda d_x^2(x, x_i).
\end{equation}
We thus need to establish a bound on 
\begin{multline}
    \delta_n = \left| \inf_{\lambda \geq 0} \lambda \epsilon - \frac{1}{n} \sum_{i=1}^n \sup_{x \in \calx} \ell(h(x), y_i) + \lambda d_x^2(x, x_i) -\right.\\ 
    \left. \inf_{\lambda \geq 0} \left[ \lambda \epsilon - \frac{1}{n} \sum_{i=1}^n \max_{x \in \{x_1, \ldots, x_n\}} \ell(h(x), y_i) + \lambda d_x^2(x, x_i) \right]\right|.
\end{multline}

To do so, let $\lambda_n$ be a minimizer of (\ref{eq:dualData}).  Then, we have that
\begin{align*}
    \delta_n &\leq 
    \begin{multlined}[t]
    \left|\lambda_n \epsilon - \frac{1}{n} \sum_{i=1}^n \sup_{x \in \calx} \ell(h(x), y_i) + \lambda_n d_x^2(x, x_i) -\right.\\
    \left.\qquad\qquad\qquad\qquad\quad\lambda_n \epsilon + \frac{1}{n} \sum_{i=1}^n \max_{x \in \{x_1, \ldots, x_n\}} \ell(h(x), y_i) + \lambda_n d_x^2(x, x_i)\right|
    \end{multlined}\\
    & = \left|\frac{1}{n} \sum_{i=1}^n \max_{x \in \{x_1, \ldots, x_n\}} \ell(h(x), y_i) + \lambda_n d_x^2(x, x_i) - \sup_{x \in \calx} \ell(h(x), y_i) + \lambda_n d_x^2(x, x_i)\right|\numberthis\label{eq:cont}
\end{align*}

Define a map $T$ on $\cald$ such that $T(x_i) \in \arg\sup_{x \in \calx} \ell(h(x), y_i) + \lambda_n d_x^2(x, x_i)$.  By assumption, we have that    
\begin{equation}
P_*(B_{n^{-1/2d}}(T(x_i)) \times \caly) \geq \delta/\sqrt{n}.
\end{equation}
Thus, the probability of the event $\{\{x_1,\ldots, x_n\} \cap \bigcup_i B_{n^{-1/2d}}(T(x_i)) = \emptyset\}$ (\ie\ the event that there are no points in the training data in $B_{n^{-1/2d}}(T(x_i))$) is at most $\sum_{i=1}^n (1 - \delta/n^{1/2})^n = n (1-\delta/n^{1/2})^n$.  Thus assume that, for each $i$, there is a point $(x_i^*, y_i^*) \in \cald$ such that $x_i^* \in B_{n^{-1/2d}}(T(x_i))$.  Then, continuing from (\ref{eq:cont}), we have that
\begin{align}
    \delta_n &\leq \frac{1}{n} \sum_{i=1}^n \left| \ell( h(x_i^*), y_i) - \ell(h(T(x_i), y_i)\right| + \left|\lambda_n (d_x^2(x_i^*, x_i) - d_x^2(T(x_i), x_i))\right|\\
    &\leq \frac{1}{n} \sum_{i=1}^n \omega_2 d_x(x_i^*, T(x_i) + \left|\lambda_n( d(x_i^*, x_i) - d(T(x_i), x_i) )(d(x_i^*, x_i) + d(T(x_i), x_i) )\right|\label{eq:Lip}\\
    & \leq\frac{1}{n}\sum_{i=1}^n \frac{\omega_2}{n^{1/2d}} +\left|\frac{2 \lambda_n \mathrm{diam}(\calx)}{n^{1/2d}}\right|
    \label{eq:triang}.\\
    & \leq \frac{\omega_2}{n^{1/2d}} + \frac{2 \omega_2 \mathrm{diam}(\calx)}{n^{1/2d}\sqrt{\epsilon}}\label{eq:bounds}.
\end{align}
In line (\ref{eq:Lip}), we use the fact that $\ell$ is $\omega_2$-Lipschitz with respect to $d_x$ (Assumption \ref{as.ess}(ii)). In line (\ref{eq:triang}), we use the fact that $x_i^* \in B_{n^{-1/2d}}(T(x_i))$ with the triangle inequality (to get that  $d(x_i^*, x_i) - d(T(x_i), x_i) \leq d(x_i^*, T(x_i))$) and with the fact that $d(T(x_i), x_i) \leq \mathrm{diam}(\calx)$.  Finally, in line (\ref{eq:bounds}), we apply Lemma A.1 from \citet{yurochkin2020Training}  which asserts that $0 \leq \lambda_n \leq \tfrac{\omega_2}{\sqrt{\epsilon}}$.  This completes the proof of the desired result.
\end{proof}

Compared to most theoretical studies of distributionally robust optimization, the proof of Theorem \ref{th.stats} is complicated by the restriction of the $c$-transform to the sample $\mathcal{D}$ in \eqref{eq:dualData}. This complication arises because the max over the sample is generally (much) smaller than the max over the sample space due to the curse of dimensionality. This discrepancy between the max over the sample and the max over the space space is responsible for the dependence of the rate of convergence on the dimension of the feature space in Theorem \ref{th.stats}.


\section{Further implementation considerations}\label{app:imp}


Here we present a method based on entropic regularization that can be used to quickly obtain an approximate solution to the linear program in Equation \ref{eq:pistar}.  Moreover, this method requires only simple matrix computations; thus, we are able to implement it in TensorFlow for quick GPU computations.

It is also possible to quickly find the solution to the dual of the linear program (\ref{eq:pistar}).  Obtaining the primal optimizer $\Pi^*$ from the dual solution, however, can be difficult.  For this reason, the Sinkhorn-based entropic regularization method runs significantly more quickly than the dual method in our implementations if we insist on using complementary slackness to determine the exact primal optimizer $\Pi^*$.  In practice, we can quickly obtain an approximation to $\Pi^*$ from the dual solution, however (Algorithm \ref{alg:dualsgd}).  A discussion of the dual method can be found in Section \ref{sec:dual}. 

\subsection{Entropic regularization}\label{sec:entreg}

Equation (\ref{eq:pistar}) is an optimization problem over probability distributions.  Thus, it is reasonable to regularize the objective function in Equation (\ref{eq:pistar}) with the entropy of the distribution.

Formally, define the matrix $R$ by $R_{i,j} = \ell(f, (x_i, y_j))$ - this is the loss incurred if point $j$ with label $y_j$ is transported to point $i$. Moreover, define the matrix $C$ by $C_{ij} = d_x^2(x_i, x_j)$, and let $\gamma$ denote a regularization parameter.  With this notation, including entropic regularization, the problem becomes:
\begin{equation}\label{eq:entreg}
\Pi^* = \arg\max_{\Pi \in \Gamma} \langle R, \Pi \rangle - \gamma\langle \log(\Pi), \Pi\rangle
\end{equation}
where  $\log(\Pi)_{ij} = \log(\Pi_{ij})$.  

Adding the entropy of $\Pi$ to the objective encourages the optimizer $\Pi^*$ to be less sparse than the (low entropy) optimizer of the original optimal transport problem.  
Note that it is not inherently desirable to find a sparse optimizer $\Pi^*$, since we only consider the marginals of $\Pi^*$ while boosting.  Coming close to maximizing the original objective $\langle R, \Pi^* \rangle$ is far more important than sparsity.

We follow the Sinkhorn method to develop a solution to the problem (\ref{eq:entreg}).  The Lagrangian is given by
\begin{equation}\label{eq:entdual}
\mathcal{L}(\Pi, \lambda, \eta) = \sum_{ij} R_{ij} \Pi_{ij} - \gamma\sum_{ij} \log(\Pi_{ij}) \Pi_{ij} - \sum_j \lambda_j( \sum_i \Pi_{ij}- \tfrac1n ) - \eta(\sum_{ij}C_{ij}\Pi_{ij}-\epsilon)
\end{equation}
so that at the optimum we have 
\begin{equation}
\frac{\partial \mathcal{L}}{\partial \Pi_{ij}} = R_{ij} - \gamma - \gamma\log(\Pi_{ij}) - \lambda_{j} - \eta C_{ij} = 0
\end{equation}
which means that
\begin{equation}\label{eq:pisink}
\Pi^*_{ij} = \exp(R_{ij}/\gamma)\exp(-\lambda_j/\gamma -1)\exp(-\eta C_{ij}/\gamma).
\end{equation}

Now, define matrices $V$ and $K$ and a vector $u$ by the following expressions:
\begin{itemize}
    \item $V_{ij} = \exp(R_{ij}/\gamma)$,
    \item $K_{ij} = \exp(-\eta C_{ij} /\gamma)$, and
    \item $u_j = \exp(-\lambda_j/\gamma - 1)$.
\end{itemize}
By definition, we have that $\Pi^*_{ij} = V_{ij} K_{ij} u_j$, and that $V$ is a constant (it does not depend on any of the variables of the Lagrangian).  Exploiting the constraints of the set $\Gamma$ (see the text before (\ref{eq:pistar})), we see that $u$ can be written in terms of $K$:
\begin{equation}
    \sum_{i} \Pi_{ij} = u_j \sum_{i} K_{ij} V_{ij} = \tfrac1n \Rightarrow \quad u_j = \frac{1}{n \sum_i K_{ij} V_{ij}}.
\end{equation}
Thus, the solution $\Pi^*$ depends only on $K$.  Since $K$ is determined completely by the Lagrange multiplier $\eta$, we need only to find the value of $\eta$ that makes the other constraint of the set $\Gamma$ tight:
\begin{equation}
    \sum_{ij} C_{ij} \Pi_{ij} = \sum_{ij} C_{ij} V_{ij} K_{ij} u_j 
    = \mathbf{1}_n^T (C \odot K(\eta) \odot V) u= \epsilon.
\end{equation}
where $A \odot B$ denotes entrywise (Hadamard) product of the matrices $A$ and $B$. 
We use a root finding algorithm to determine the optimal value of $\eta$: specifically, we find the root of $m(\eta) = \epsilon - \mathbf{1}_n^T (C \odot K(\eta) \odot V) u$.  In our experiments, the bisection or secant methods generally converge in only around 10 to 20 evaluations of $m(\eta)$.

A fast method for evaluating $m(\eta)$ is presented in Algorithm \ref{alg:sink}.  It consists of simple entrywise matrix operations (entrywise multiplications and exponentiations); this is highly amenable to processing on a GPU, and we have implemented this Sinkhorn-based method with TensorFlow.  In Algorithm \ref{alg:pisink}, we provide a method for using the optimal root $\eta^*$ to obtain the coupling matrix $\Pi$ following Equation (\ref{eq:pisink}).

\begin{algorithm}
	\caption{Fast evaluation of Sinkhorn objective}\label{alg:sink}
	\begin{algorithmic}[1]
		\STATE{\bfseries Input}: $\eta \geq 0$; cost matrix $C$; loss matrix $R$;
		tolerance $\epsilon$; regularization strength $\gamma$.
		\STATE Let $T \gets \exp(R/\gamma - \eta C/\gamma)$ \COMMENT{entrywise exponentiation}
		\STATE Let $u \gets 1/(n \mathbf{1}_n^T \cdot T)$ \COMMENT{entrywise reciprocation}
		\STATE\textbf{return} $\epsilon - \mathbf{1}_n^T \cdot (C\odot T) \cdot u$
	\end{algorithmic}
\end{algorithm}

\begin{algorithm}
	\caption{Find $\Pi^*$ using entropic regularization via (\ref{eq:pisink})}\label{alg:pisink}
	\begin{algorithmic}[1]
		\STATE{\bfseries Input}: cost matrix $C$; loss matrix $R$;
		tolerance $\epsilon$;  regularization strength $\gamma$.
		\STATE Let $\eta^*$ be the root of Algorithm \ref{alg:sink} with fixed inputs $C$, $R$, $\epsilon$, and $\gamma$.
		\STATE Let $S \gets \exp(R/\gamma - \eta^* C/\gamma)$.
		\STATE Let $u \gets 1/(n \mathbf{1}_n \cdot S)$.
		\STATE Define $\Pi$ by $\Pi_{ij} = S_{ij} u_j$.
		\STATE\textbf{return} $\Pi$
	\end{algorithmic}
\end{algorithm}

\subsection{Stochastic gradient descent for finding \texorpdfstring{$\eta^*$}{} with entropic regularization}\label{sec:entregsgd}

Although the Sinkhorn-based method presented above is fast, note that it is still not especially scalable, as it requires holding at least two $n \times n$ matrices ($R$ and $C$) in memory at the same time.  Assuming double precision floating point arithmetic, each of these matrices requires more than 10 GB of memory when $n$ is approximately $4 \times 10^4$.

Thus, following \citet{Genevay2016stoch}, we express the determination of the optimal dual variable $\eta^*$ as the minimization of an 
expectation over the empirical distribution $P_n$; thus, it is possible to leverage stochastic gradient descent (with mini-batching) to quickly find the optimal dual variable $\eta^*$ while using a small amount of memory.

To develop this expression, note that the Lagrangian dual to the problem (\ref{eq:entreg}) is given by 
\begin{equation}
\min_{\eta \geq 0, \lambda} \left[ \max_\Pi \mathcal{L}(\Pi, \lambda, \eta) \right]
\end{equation}
where $\mathcal{L}$ is defined in Equation (\ref{eq:entdual}).  The optimal value
of $\Pi_{ij}^*$ for the inner maximum is established in Equation (\ref{eq:pisink}).
Using this optimal value, we see that
\begin{equation}
\log(\Pi_{ij}^*) = \frac{1}{\gamma} \left( R_{ij} - \lambda_j - \eta C_{ij} \right) - 1
\end{equation}
and we can use this to calculate that
\begin{equation}\label{eq:langpiplug}
\mathcal{L}(\Pi^*, \lambda, \eta) = \eta \epsilon + \frac{1}{n}\sum_{j} \lambda_j + \gamma \sum_{ij} \Pi^*_{ij} = \eta \epsilon + \frac{1}{n}\sum_{j} \lambda_j + \gamma \sum_{ij} \exp\left( \frac{ R_{ij} - \lambda_j- \eta C_{ij}}{\gamma} - 1\right).
\end{equation}
To minimize this with respect to $\lambda$ (which is unconstrained) we set
\begin{equation}
\frac{\partial}{\partial \lambda_j}(\mathcal{L}(\Pi^*, \lambda, \eta)) = \frac1n  - \exp\left( \frac{-\lambda_j}{\gamma} \right) \sum_{i} \exp\left( \frac{R_{ij} - \eta C_{ij}}{\gamma}  - 1\right) = 0
\end{equation}
which means that
\begin{align*}
&\exp\left( \frac{-\lambda_j^*}{\gamma} \right) = \frac{1}{n} \cdot \left(\sum_{i} \exp\left( \frac{R_{ij} - \eta C_{ij}}{\gamma}  - 1\right)\right)^{-1}\\
\Rightarrow \quad &  \lambda_j^* = -\gamma \log \tfrac{1}{n} +  \gamma \log\left(\sum_{i} \exp\left( \frac{R_{ij} - \eta C_{ij}}{\gamma}  - 1\right) \right).
\end{align*}
Thus, substituting into (\ref{eq:langpiplug}), we calculate that
\begin{equation}\label{eq:langplug}
\mathcal{L}(\Pi^*, \lambda^*, \eta) = \gamma \ +\sum_j\frac{1}{n}\Bigg{(} \eta \epsilon +  \gamma \log\left(\sum_{i} \exp\left( \frac{R_{ij} - \eta C_{ij}}{\gamma}  - 1\right) \right) - \gamma \log \tfrac{1}{n}\Bigg{)}
\end{equation}
The value $\eta \geq 0$ that minimizes  (\ref{eq:langplug}) is the same as the minimizer of
\begin{equation}\label{eq:expect}
\min_{\eta \geq 0} \mathbb{E}_{(x, y) \sim P_n} \Bigg[ 
\eta \epsilon + 
\gamma \log\left(\sum_{i} \exp\left( \frac{\ell( f, (x_i, y) ) - \eta d_x^2(x_i, x)}{\gamma}  \right)\right)
\Bigg]
\end{equation}
where we ignored some constants that don't affect the minimizing value $\eta^*$ and substituted in the definitions of $R_{ij}$ and $C_{ij}$ (see the paragraph at the start of Section \ref{sec:entreg}).  

The problem \ref{eq:expect} is amenable to minimization via stochastic gradient descent (SGD) - this is presented in Algorithm \ref{alg:sinksgd}.  In every descent step of Algorithm \ref{alg:sinksgd}, we are only
working with a subset of the columns of $R$ and $C$.  Thus, $R$ and $C$ may be
stored anywhere (or even computed on-the-fly) - they do not need to be kept in RAM or sent to the GPU memory.  Thus, this SGD version can be used 
for essentially arbitrarily large data sets.  Empirically, we also observe
good results from running only a few gradient descent steps to obtain an approximation to $\eta^*$ in every boosting step; this allows the SGD method to run as quickly as (or more quickly than) the normal Sinkhorn method (Algorithm \ref{alg:sink}).  To find the final transport map $\Pi$, we use Algorithm \ref{alg:pisink}, as before.

\begin{algorithm}
	\caption{SGD to find optimal dual variable with entropic regularization $\eta^*$}\label{alg:sgd}\label{alg:sinksgd}
	\begin{algorithmic}[1]
		\STATE{\bfseries Input}: Starting point $\eta_1 > 0$; cost matrix $C$; loss matrix $R$;
		 tolerance $\epsilon$; regularization strength $\gamma$;
	    batch size $B$, step sizes $\alpha_t > 0$.
		\REPEAT 
		\STATE Sample indices $j_1, \ldots, j_B$ uniformly at random
		from $\{1, \ldots n\}$.
		\STATE Let $R_t \gets$ columns $j_1, \ldots, j_B$ of $R$.  Let $C_t \gets$ columns $j_1, \ldots, j_B$ of $C$.  \COMMENT{$R_t, C_t \in \rr^{n \times B}$}
		\STATE Let
		$w^t_j(\eta) \gets$ sum all elements in column $j$ of
		$\exp\left( \tfrac{1}{\gamma}(R_t - \eta C_t)\right)$ \COMMENT{entrywise exponentiation}
		\STATE $\eta_{t + 1} \gets \max\{ 0, 
			\eta_t - \alpha_t\epsilon - \alpha_t \gamma \frac{d}{d\eta}\left[ \sum_{j = 1}^B \log w^t_j(\eta)				 
			  \right]_{\eta = \eta_t}\}$
		\UNTIL converged
	\end{algorithmic}
\end{algorithm}

\subsection{Dual of robust empirical loss function \texorpdfstring{$L$}{}}\label{sec:dual}


Although the algorithms presented in the preceding section run quickly and can handle arbitrarily large inputs, they only obtain an approximation to the true optimal transport map (due to the entropic regularization).  Here, we present a method to quickly obtain the solution to the dual of the original linear program (\ref{eq:pistar}).  We find that constructing the transport map $\Pi^*$ from the knowledge of the dual 
optimum is difficult; however, there are quick methods to produce reasonable approximations to $\Pi^*$ from the knowledge of the dual solution.

Following standard methods, the dual of the linear program (\ref{eq:pistar}) is given by:
\begin{equation}\label{eq:dualdef}
\begin{split}
&\inf_{\eta \geq 0} \epsilon \eta + \tfrac{1}{n}\sum_{j=1}^n \nu_j\\
&\mathrm{s.t.\ } \nu_j \geq R_{ij} - \eta C_{ij} \mathrm{\ for\ all\ }i,j
\end{split}
\end{equation}
Which can be simplified to:
\begin{equation}\label{eq:dualobj}
\inf_{\eta \geq 0} \epsilon \eta + \tfrac{1}{n}\sum_{j=1}^n \max_{i} R_{ij} - \eta C_{ij}.
\end{equation}
Define $\lambda_j(\eta) = \max_{i} R_{ij} - \eta C_{ij}$ and let $M(\eta) = \epsilon \eta + \frac{1}{n}\sum_{j=1}^n \lambda_j(\eta)$ denote the dual objective function from (\ref{eq:dualobj}).  Note that each $\lambda_j(\eta)$ is a monotone decreasing convex piecewise linear function of $\eta$.  In addition, since $C_{jj} = 0$, we have that $\lim_{\eta \to \infty} \lambda_j(\eta) \geq  R_{jj}$; that is, the $\lambda_j$ are positive and eventually become constant.

This means that the infimum (over $\eta$) will either occur at $\eta = 0$ or at one of the \emph{corners} of one of the $\lambda_j$ (a \emph{corner} is a value of $\eta$ such that there exist $i_1$ and $i_2$, $i_1 \neq i_2$, with $R_{i_1 j} - \eta C_{i_1 j} = R_{i_2 j} - \eta C_{i_2 j} = \max_i R_{i j} - \eta C_{i j}$; i.e.\ at least two of the lines that define $\lambda_j$ are intersecting and creating a point where $M(\eta)$ is not differentiable).  Thus, it is theoretically possible to exactly obtain the optimizer $\eta^*$ by enumerating and testing all of these corners.

In practice, we have found that it is quicker to approximate $\eta^*$ by noticing that an element of the subgradient for $M(\eta)$ is given by
\begin{equation}\label{eq:dualsub}
\epsilon  - \frac{1}{n}\sum_j C_{i_j, j} \quad \colon \quad i_j \in \arg\max_i R_{ij} - \eta C_{ij}.
\end{equation}
Thus, a bisection method can be implemented to approximate the point $\eta^*$ such that $\frac{d}{d\eta} M(\eta) < 0$ for $\eta < \eta^*$ and $\frac{d}{d\eta} M(\eta) > 0$ for $\eta > \eta^*$.  This is the corner that we are looking for.  Using a bisection method (or similar) guarantees that we only need a small fixed number ($\approx 30$) of subgradient computations in every boosting step.   

Complementary slackness can directly be exploited to quickly find the desired solution $\Pi^*$ to the original discrete SenSR LP from the dual optimizer $\eta^*$ if the following conditions hold:
\begin{enumerate}
	\item There is an index $j$ such that the point $\eta^* = \arg \min_{\eta \geq 0} M(\eta)$ is a corner of $\lambda_j$ with $|\arg \max_i R_{ij} -\eta^* C_{ij}| = 2$ (only two of the lines that define $\lambda_j$ intersect at $\eta_*$).
	\item For all $k \neq j$, the point $\eta^*$ is not a corner of $\lambda_k$ (i.e. $|\arg \max_i R_{ik} - \eta^* C_{ik}| = 1$).
\end{enumerate}
These conditions are based on the fact that, for all $i$ and $j$,  $\Pi_{ij}$ is the primal variable corresponding to the constraint $v_j \geq R_{ij} - \eta C_{ij}$ in the dual (\ref{eq:dualdef}).  Condition 2 implies that, for all $k \neq j$, there is a value of $t$ such that $\Pi^*_{tk} = \tfrac{1}{n}$ and 
$\Pi^*_{ik} = 0$ for all $i \neq t$.  Then, condition 1 implies that there are only two nonzero values in column $j$ of $\Pi^*$. These two nonzero values must sum to $\tfrac{1}{n}$ and the constraint $\langle C, \Pi^* \rangle = \epsilon$ must be satisfied.  With the complete knowledge of the other values of $\Pi^*$, this results in two linear equations with two unknowns (the two remaining nonzero values of $\Pi^*$), and this system can be easily solved to give the exact solution $\Pi^*$.

Similar tricks can be applied to rapidly solve for $\Pi^*$ when slightly more entries are candidates for being nonzero (according to complementary slackness).  Unfortunately, since $d_x$ is a fair distance (and thus it usually ignores several directions in $\mathcal{X}$), it is often the case that there are multiple individuals $x_i \neq x_j$ in the training data such that $d_x(x_i, x_j) = 0$ (i.e.\ there are a significant number of off-diagonal entries of $C$ that are 0).  In practice, we have observed that these points also often produce the same predicted value $f(x_i) = f(x_j)$ after the first step of boosting.  This results in a situation where complementary slackness presents a complicated system of equations for obtaining the primal optimizer $\Pi^*$.  

Thus, in order to maintain an efficient boosting algorithm in practice, we use the dual optimizer $\eta^*$ to construct an approximation $\hat{\Pi}$ to the primal optimizer $\Pi^*$ using the following heuristic: for all $j$, randomly select $t$ in $\argmax_i R_{ij} - \eta^* C_{ij}$ and let $\hat{\Pi}_{tj} = \tfrac{1}{n}$.  That is, in each column, randomly select one of the candidate nonzero entries (according to complementary slackness) and set it to $\tfrac{1}{n}$ in $\hat{\Pi}$.  This approximation $\hat{\Pi}$ actually is an interpretable transport map: each point in the training data is mapped (completely) to another point in the training data. We present the construction of $\hat{\Pi}$ using Algorithm \ref{alg:dualsgd} in the main text, and we use Algorithm \ref{alg:dualsgd} in our experiments on the German credit data set the Adult data set in Section \ref{sec:expts}. An outline of the full dual method (not using SGD) is presented in Algorithm \ref{alg:pidual}.  Although it is not exactly clear as to how good (or bad) of an approximation $\hat{\Pi}$ is to $\Pi^*$, our experimental results show that it is functional.  

\begin{algorithm}
	\caption{Find $\Pi$ using the dual formulation (following (\ref{eq:dualsub}))}\label{alg:pidual}
	\begin{algorithmic}[1]
		\STATE{\bfseries Input}: cost matrix $C$; loss matrix $R$;
		tolerance $\epsilon$; root tolerance $\delta$
		\STATE Use the bisection method (or something similar) to find a value $\eta^*$ such that for all $\eta$
		\begin{itemize}
			\item $\sup\{\epsilon  - \frac{1}{n}\sum_j C_{i_j j} \colon i_j \in \arg\max_i R_{ij} - \eta C_{ij}\} < 0$ when $\eta < \eta^* - \delta$
			\item $\inf\{\epsilon  - \frac{1}{n}\sum_j C_{i_j j} \colon i_j \in \arg\max_i R_{ij} - \eta C_{ij}\} > 0$ when $\eta > \eta^* + \delta$.
		\end{itemize}
		\STATE Obtain $\Pi$ using steps 8-13 of Algorithm \ref{alg:dualsgd}.
		\STATE\textbf{return} $\Pi$
	\end{algorithmic}
\end{algorithm}

\section{Experimental details}\label{app:exp-details}

In this section, we provide the details of the experiments using the BuDRO algorithm. We start with a synthetic motivation, then discuss the three data sets that are presented in the main text.

\subsection{Synthetic motivation}\label{sec:synth}

Consider a data set with two features, $x_1$ and $x_2$.  Suppose that the one feature $x_1$ is protected and the other feature $x_2$ is can be used for making fair decisions.  For example: our task may be to decide which individuals to approve for a loan.  The protected feature $x_1$ may correspond to the percentage of nonwhite residents in the applicant's zip code, and the other feature $x_2$ may correspond to the applicant's credit score (or some other fair measure of credit worthiness).  In this case, we can approximate a fair metric $d_x$ by the difference in the second feature: the fair distance between the two individuals $(x^a_1, x^a_2)$ and $(x^b_1, x^b_2)$ is  given by $d_x( (x^a_1, x^a_2), (x^b_1, x^b_2) ) = |x_2^a - x_2^b|$.

We synthetically constructed such a data set in the following manner: 150 individuals were independently drawn from the same centered normal distribution.  Let $R$ be a rectangle of minimum area containing the 150 points; let $L_1$ be the line passing through the top right corner and the bottom left corner of $R$ and $L_2$ be the line passing through the top left corner and the bottom right corner of $R$.  $125$ of the samples were chosen (uniformly at random) to belong to majority white neighbourhoods: these individuals were labeled 0 if they were below $L_1$ and 1 if they were above $L_1$, and then they were all shifted to the left (by 2, so that the highly white cluster is centered at $(-2,0)$).  The remaining 25 individuals make up the majority nonwhite cluster: they were classified according to $L_2$ and then centered at $(2,0)$.

\begin{figure}
    \centering
	\begin{subfigure}{.5\textwidth}
		\centering
		\includegraphics[width=.9\textwidth]{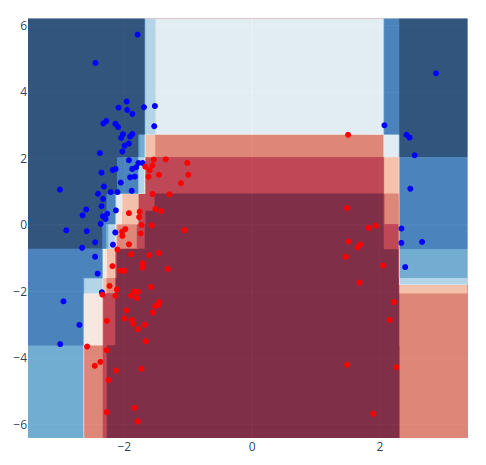}
		\caption{Naive boosted tree classifier}
	\end{subfigure}%
	\begin{subfigure}{.5\textwidth}
		\centering
		\includegraphics[width=.9\linewidth]{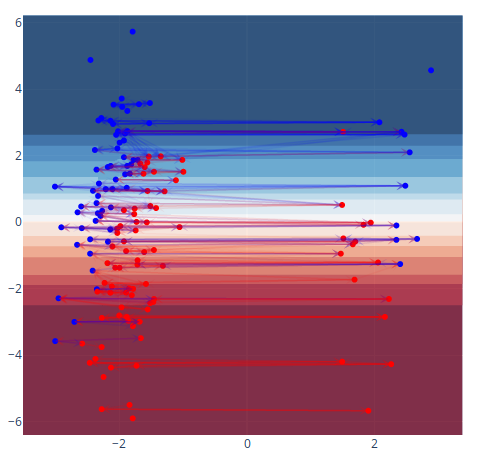}
		\caption{Fair boosted tree classifier}
	\end{subfigure}
    \caption{Comparison of GBDT classifiers without (a) and with (b) the fairness constraints introduced in this paper.  The classifiers output a probability in $[0,1]$ - these probabilities are discretized to binary labels to create a classification.  In the figures, red points are individuals with true label 0 and blue points are individuals with true label 1.  The darker red areas correspond to lower output probabilities and the darker blue areas correspond to higher output probabilities. The arrows in (b) indicate the transport map corresponding to the fairness constraint for the previous boosting step.}
    \label{fig:fair-synth}
\end{figure}

In Figure \ref{fig:fair-synth}, we show such an example of this setup.  The red points in the figure correspond to individuals that are labeled 0; these represent individuals that should be declined for the loan (\eg\ they have defaulted on a loan, with these data collected in the past to make future predictions). The blue points are individuals that are labeled 1 and represent people who should be approved (e.g.\ they have paid back a loan in the past).  The horizontal axis represents the percentage of nonwhite residents in an individual's zip code, while the vertical axis is taken as some fair measure of credit worthiness.

The naively trained GBDT classifier in Figure \ref{fig:fair-synth}(a) shows high accuracy on this synthetic data set, but it is unfair - requiring that individuals from neighbourhoods containing medium to high percentages of nonwhite individuals obtain a significantly higher credit score than those in very racially homogeneous neighbourhoods.
Applying our individually fair gradient boosting algorithm to the data set, however, results in a fair classifier (visualized in Figure \ref{fig:fair-synth}(b)).  The credit score threshold for loan acceptance is consistent across the different racial make ups of zip codes. 
This provides a synthetic visualization of the performance of the BuDRO method, and suggests that it is working as we have described.

As a tangent: note that the clusters in this example are generated in a symmetric manner. This is a case where the unfairness in the naively trained classifier is coming from a lack of data and a push for extra accuracy rather than a specific inherent bias in the data.

\subsection{Details common to all experimental data sets}\label{app:common}

\paragraph{Fair metric}

We follow one of the methods presented \citep{yurochkin2020Training} to 
construct fair metrics for the experimental data sets.  Specifically, for a given data set, we determine a finite set $T$ of protected directions in $\mathcal{X}$. In a similar fashion to the synthetic example in \ref{sec:synth}, changes in these protected directions should intuitively be ignored by a fair measure.  Thus, the fair metric $d_x$ is defined by projecting onto the orthogonal complement of $\mathrm{span}(T)$
and considering the Euclidean distance on the projected space.  In particular,
let $A = \mathrm{span}(T)$ and $\mathrm{proj}(A)$ denote the projection onto $A$. Then, 
\begin{equation}\label{eq:adultdist}
d_x(x_1, x_2) = \|((I - \mathrm{proj}(A))\cdot(x_1 - x_2)\|_2.
\end{equation}

The protected directions are determined in a similar fashion for each experimental data set.  In particular, each data set contains (one or more) protected attributes: the indicator for each protected attribute is included in the set of protected directions\footnote{The indicator for an attribute is a vector with only one nonzero entry; that nonzero entry appears in the attribute that we are indicating.}.

We additionally obtain one or more extra protected directions in the following manner.  For a fixed protected attribute $g$ (e.g.\ gender), we remove the feature $g$ from the data set and train a linear model on the edited data set to predict the removed feature $g$ (we use logistic regression with an $\ell_2$ regularization of strength 0.1 when $g$ is binary; ridge regression with cross validation is used for a non-binary feature $g$).  Let $w$ be the normal vector to the separating hyperplane corresponding to this linear model: we include $w$ in $T$, the set of protected directions.

For example, on the Adult data set, we consider both the \texttt{gender} and \texttt{race} features as protected attributes (both features are binary, see the description in Section \ref{sec:expts}).  The set $T$ for Adult then contains three 
protected directions.  Two of these protected directions are given by
$e_g$, the indicator vector of the \texttt{gender} feature, and $e_r$, the indicator vector of the \texttt{race} feature. We obtain the third protected direction $w$ by removing the \texttt{gender} feature from the data set and training
logistic regression on this edited data set with targets given by the \texttt{gender} feature.  Several of the features in the Adult data set (such as the \texttt{is\_husband} and \texttt{is\_wife} categories of the \texttt{relationship} feature) are highly predictive of gender.  Combined with the gender imbalance in the data set, this allows for logistic regression to be able to predict gender with nearly 80\% accuracy on a (holdout) test set. 
We thus take $w$ to be the normal vector to the separating hyperplane discovered during this logistic regression.

Note that a metric defined in this way will assign a distance of $0$ between two individuals that differ only in the protected attributes (e.g. gender or race) but are identical in all other features.  Additionally, if the difference $x_1 -x_2$ between two individuals is nearly parallel to one of the logistic regression directions $u$, then $d(x_1, x_2)$ will also be small.  For example, on the Adult data set, the vector $w$ (defined in the previous paragraph) exhibits comparatively high support on the \texttt{is\_husband} and \texttt{is\_wife} categories of the \texttt{relationship} feature. Thus, if $x_1$ and $x_2$ are identical except that $x_1$ has $\texttt{wife} = 1$ and $x_2$ has $\texttt{husband} = 1$, then $d_x(x_1, x_2)$ will be small.

This fair metric $d_x$ is an approximation to an actual fair metric on $\mathcal{X}$: it does not capture information about all protected features, and only makes a heuristic approximation to reduce differences between true race and gender groups of individuals.

\paragraph{Comparison methods}
Here we give an overview of the ML methods that we compare to BuDRO. 
Please refer to the included code for full information about the hyperparameter grids that we consider; see the respective data set sections for further information about training details and the optimal hyperparameter choices.

We consider three boosting methods in addition to BuDRO: baseline GBDTs, projecting, and reweiging.  Our implementations of these methods all use the XGBoost framework \citep{chen2016XGBoost}.  The GBDT hyperparameters that we consider are:
\begin{itemize}
    \item \texttt{max\_depth}, the maximum depth of the decision trees used as weak learners;
    \item \texttt{lambda}, an $\ell_2$ regularization parameter; 
    \item \texttt{min\_weight}, a tree regularization parameter; and 
    \item \texttt{eta}, the XGBoost learning rate.
\end{itemize}
We additionally examine the effects of boosting for different numbers of steps.  We always present the results of reweighing using the default XGBoost hyperparameters.


The \emph{projecting} preprocessing method functions by eliminating the entire protected subspace from the data set before training a vanilla GBDT (see, for example \citet{bower2018Debiasing}, \citet{yurochkin2020Training}, \citet{prost2019Debiasing}).  In particular, using the notation from the discussion of fair metrics (above), we project all of the data onto the orthogonal complement of $\mathrm{span}(T)$ as a preprocessing step.  This has the effect that the final classifier will be completely blind to differences in the protected directions in $T$.  Our experiments (see e.g.\ Table \ref{tab:res-adult}) show that this is not enough to produce an individually fair classifier: changes to attributes that are highly correlated with elements of $T$ are still used to make classification decisions.

\emph{Reweighing} is presented in \citet{Kamiran2011}.  Essentially, this method functions by assigning weights to the individuals in the training data.  These weights are chosen to force protected group status to appear statistically independent to the outcomes in $\mathcal{Y}$.  Then a GBDT is trained on the reweighted data.  This is inherently a group fairness method (the data are weighted to match a group fairness constraint) that cannot be used when the protected attribute does not take a finite number of values.  We use the default XGBoost parameters along with reweighing to generate the values that are presented in the main text.

To test if GBDT classifiers are useful on the tabular data sets that we consider here, we also train (naive) one-layer (100 unit) fully connected neural networks on all data sets.
For the Adult and the COMPAS data sets,
we additionally compare to the SenSR method from \citet{yurochkin2020Training} and the adversarial debiasing method from \citet{zhang2018mitigatingub}.
The SenSR method creates an individually fair neural network through stochastic gradient descent on a robust loss similar to the one considered in this work.  Adversarial debiasing is based on minimizing the ability to predict the protected attributes from knowledge of the final outputs of a predictor, and also draws on ideas from robustness in machine learning.  It is constructed to provide improvements to statistical group fairness quantities rather than for the creation of an individually fair classifier.

\paragraph{Evaluation metrics}

We evaluate the ML methods with a type of individual fairness metric that we define below, also discussed Section \ref{sec:expts} of the main text.  It is not generally true that individual fairness will imply group fairness: it will depend on the group fairness constraint in question as well as the fair metric $d_x$ on $\mathcal{X}$.  We thus also report several group fairness metrics to allow for comparison with other methods in the literature; it remains future work to establish the precise conditions under which individual fairness will imply group fairness.

The specific group fairness metrics that we consider here are $\textsc{Gap}_\mathrm{Max}$ and $\textsc{Gap}_\mathrm{RMS}$, as introduced in \citet{de-arteaga2019bias}.  
Suppose that there are two groups of individuals, labeled by $g=0$ (a protected group) and $g=1$ (a privileged group).  
Then, given true outcomes $Y$ and predicted outcomes $\hat{Y}$, we can consider a statistical fairness gap defined by
\begin{align*}
    \mathrm{Gap}_y &= P(\hat{Y} = y | Y = y, g = 0) \\
                   &- P(\hat{Y} =y | Y =y , g = 1)
\end{align*}
for each possible outcome $y \in \mathcal{Y}$.
Note that a large value of $|\mathrm{Gap}_y|$ corresponds to (correctly) assigning the outcome $y$ to a 
larger fraction of individuals belonging one of the groups than the other.  For example, suppose that $\mathcal{Y} = \{0,1\}$, with an $1$ indicating a favorable outcome.  Then, a large value of $|\mathrm{Gap}_1|$ means that our classifier is able to identify the successful individuals from one group at a higher rate than it is able to identify the successful individuals from the other.
Thus, large values of $\mathrm{Gap}_y$ indicate unfair performance by the classifier at the level of groups.  
We then define
\begin{equation}\label{eq:max_rms}
\begin{aligned}
    \textsc{Gap}_\mathrm{Max} &= \max_{y \in \mathcal{Y}} |\mathrm{Gap}_y|\\
    \textsc{Gap}_\mathrm{RMS} &=  \sqrt{\frac{1}{|\mathcal{Y}|}\sum_{y \in \mathcal{Y} } \mathrm{Gap}_y^2}.
\end{aligned}
\end{equation}

It is difficult to evaluate the individual fairness of an ML model since we only have access to an approximation to a true fair distance on $\mathcal{X}$ (we cannot, for example, consider pairs of training points that are definitely close to each other). 
We expect that the fair metrics considered in this work are good approximations to true fair metrics; nonetheless, we still desire to evaluate the individual fairness of an ML model in a way that is agnostic to the choice of approximate fair metric.
For this reason, we construct counterfactual pairs of individuals (that is, pairs of individuals that differ only in certain protected attributes) from the input data.  Intuitively, these counterfactual pairs should be treated the same way by any individually fair classifier (they should be `close` according to the true fair metric).\footnote{This is somewhat debatable.  For example, does flipping gender but keeping all other attributes the same truly result in an equivalent individual of a different gender?}
We thus examine how often these counterfactual individuals are assigned to the same outcome. These evaluation criteria are inspired by \citet{Garg2018CounterfactualFI}, which examined changes in predicted sentiment when changing one word in a sentence.

Specifically, suppose an attribute $g$ takes values in a set $V$.  To measure the consistency of the predictor $f$ with respect to $g$ (the ``$g$-consistency of $f$"), we construct $|V|$ copies of the test data.  These copies are altered so that the value of the attribute $g$ is constant on each copy and so that each value in $V$ is represented in a copy of the data.  We then apply $f$ to each copy of the data to obtain $|V|$ vectors of predicted outcomes $\hat{y}_1, \ldots, \hat{y}_{|V|}$, $\hat{y}_j \in \mathcal{Y}^n$.  The $g$-consistency of $f$ is then the fraction of individuals who are assigned the same outcome in every copy of the data set.  That is, it is the fraction of indices $i$ such that 
$(\hat{y}_1)_i = (\hat{y}_2)_i = \ldots = (\hat{y}_{|V|})_i$.

\subsection{German credit}
\label{app:german}

German credit is a data set that is commonly evaluated in the fairness literature \citep{AdultDua}. It contains information about 20 attributes (seven numerical, 13 categorical) from 1000 individuals; the ML task is to label the individuals as good or bad credit risks.  
Several of the features, such as \texttt{amount\_in\_savings} and \texttt{length\_current\_employment}, are numerical but have been recorded as categorical.  Additionally, some of the other features,
such as \texttt{employment\_skill\_level} and \texttt{credit\_history\_status} are categorical but could be considered to be ordered (for example, ``all credits paid" is better than ``past delays in payments" which is better than ``account critical").
In some analyses, these ordered categorical features are converted to integers; for the purposes of this analysis, we one-hot encode all of the categorical features (which results in 62 features in our input data).  We additionally standardize each numerical feature (that is, center by subtracting the mean and divide by the standard deviation).  None of the data points are removed.

We treat \texttt{age} (a numerical feature that we standardize) as the protected attribute in the German credit data set.  In order to report the group fairness metrics, we consider two protected groups: one group consists of individuals who are younger than 25, the other group consists of those who are 25 or older.  This split was proposed by \citet{Kamiran2009} to formalize the potential for age discrimination with this data set.




\paragraph{Fair distance}

For the German credit data set, we consider two protected directions, the first of which is the indicator vector $e_a$ of the \texttt{age} attribute. Additionally, following the framework described in \ref{app:common}, we eliminate the \texttt{age} attribute from the data and use ridge regression to train a classifier to predict age from the other features (we use the default parameters in the \texttt{RidgeCV} class from the \texttt{scikit-learn} package, version \texttt{0.21.3} \citep{scikit-learn}).  The second protected direction $w$ is a normal vector to the hyperplane created by ridge regression (i.e.\ it is the vector of ridge regression coefficients).  

\paragraph{Evaluation metrics}

As discussed above, we consider the $\mathrm{Gap}_\textsc{RMS}$ and 
$\mathrm{Gap}_\textsc{MAX}$ for a binarized version of the \texttt{age} attribute.  

For an individual fairness metric, we cannot examine an age consistency, since age is a numerical feature.  German credit contains a categorical \texttt{personal\_status} feature that encodes some information about gender and marital status\footnote{The \texttt{personal\_status} categories are \texttt{male\_single}, \texttt{male\_married/widowed}, \texttt{male\_divorced/separated}, \texttt{female\_single}, and  \texttt{female\_divorced/separated/married}.}, however.  After one-hot encoding, we find that several of the \texttt{personal\_status} categories are well-correlated with the \texttt{age} attribute according to ridge regression.  Specifically, the \texttt{male\_divorced/separated} category is positively correlated with age (the ridge regression coefficient has an average of $0.26$ over our 10 train/test splits) and the \texttt{male\_married/widowed} feature is negatively correlated with age (the ridge regression coefficient has an average of $-0.25$).  Since this \texttt{personal\_status} attribute can be considered to be a protected attribute, we examine a consistency measure based on this \texttt{personal\_status} attribute as described in \ref{app:common}.  We refer to this consistency measure as \emph{status consistency} (or \emph{S-cons}).

\paragraph{Method training and hyperparameter selection}

The labels of the German credit data are quite unbalanced (only 30\% of the labels are 1).  We thus train all ML methods to optimize the balanced accuracy.  For the GBDT methods (baseline, projecting, and BuDRO), this is accomplished by setting the XGBoost parameter \texttt{scale\_pos\_weight} to $\frac{0.7}{0.3}$, the ratio of zeros to ones in the labels of the training data.  For naive (baseline) NNs, we sample each minibatch to contain equal numbers of points labelled 0 and labelled 1.

For both the baseline GBDT classifier and the projecting method, we search over a grid of GBDT hyperparameters on ten 80\% train/20\% test splits, and choose the set of hyperparameters that optimizes the average balanced test accuracy over those ten splits. See \ref{app:common} for the parameters that we tune and the code for the values that we consider.  For BuDRO, we tune parameters by hand on one 80\% train/20\% split.  We use the dual implementation to find the optimal transport map (without SGD, see Algorithm \ref{alg:pidual}); thus the only extra hyperparameter to consider is the perturbation budget $\epsilon$.  The data in Table \ref{tab:res-german} are collected by averaging the results obtained using the optimal hyperparameter choices across 10 new 80\% train/20\% test splits (the same splits for each method).  The optimal hyperparameters are presented in Table \ref{tab:german-parms}.

\begin{table}[ht]
\caption{Optimal XGBoost parameters for German credit data set.  For BuDRO, we also used a pertubation budget of $\eps = 1.0$.}
    \label{tab:german-parms}
    \centering
    \begin{tabular}{c|ccccc}
        Method & \texttt{max\_depth} & \texttt{lambda} &  \texttt{min\_weight} & \texttt{eta} & steps\\
        \hline
        Baseline & 10 & 1000 & 2 & 0.5 & 105\\
        Projecting & 7 & 2000 & 2 & 0.5 & 111\\
        BuDRO & 4 & 1.0 & 1/80 & 0.005 & 90
    \end{tabular}
\end{table}

We also train a neural network on the German credit data set (see Appendix \ref{app:common} for a description of the architecture).  We find that we consistently obtain high accuracy when we use a learning rate of $10^{-4}$ and run for 4100 epochs (without any $\ell_2$ regularization).

\paragraph{Results}  Table \ref{tab:res-german-std} reproduces the data from the main text (averages over ten 80\% train/20\% test splits) including standard deviation values.
\begin{table}[ht]
\caption{Results on German credit data set.  We report the balanced accuracy in the second column.  These results are based on 10 splits into 80\% training and 20\% test data.}
    \label{tab:res-german-std}
    \centering
    \begin{tabular}{c|c|c|cc}
     &&& \multicolumn{2}{c}{Age gaps}\\
        Method & BAcc & Status cons & $\textsc{Gap}_\mathrm{Max}$ & $\textsc{Gap}_\mathrm{RMS}$ \\
        \hline
        \hline
        BuDRO & 0.715$\pm$0.032 & \textbf{0.974}$\pm$0.025  & \textbf{0.185}$\pm$0.055 & 0.151$\pm$0.048\\
        Baseline & \textbf{0.723}$\pm$0.019 & 0.920$\pm$0.022 & 0.310$\pm$0.159 & 0.241$\pm$0.109\\
        Project & 0.698$\pm$0.024 & 0.960$\pm$0.029 & 0.188$\pm$0.086 & \textbf{0.144}$\pm$0.064\\
        \hline
        Baseline NN & 0.687$\pm$0.031& 0.826$\pm$0.028 & 0.234$\pm$0.126 & 0.179$\pm$0.093
    \end{tabular}
\end{table}



\subsection{Adult}
\label{app:adult}

The Adult data set \citep{AdultDua} is another commonly considered benchmark data set in the algorithmic fairness literature.  After preprocessing and removing entries that contain missing data, we consider a subset of Adult containing information about 11 demographic attributes from 45222 individuals.  Each individual is labelled with a binary label that is $0$ if the individual makes less than $\$50000$ per year and $1$ if the individual makes more than $\$50000$ per year.
In our preprocessing, we standardize the (five) continuous features and one-hot encode the (six) categorical features, resulting in 41 total features that we use in our analysis.  These features include several attributes that could be considered protected, such as \texttt{age} (continuous), \texttt{relationship\_status} (categorical), \texttt{gender} (binary), and \texttt{race} (here we consider a binary white vs non-white \texttt{race} feature).
For this experiment, we choose to construct a predictor $f$ for the labels that is fair according to the \texttt{gender} and \texttt{race} attributes.

As mentioned in the main text, we exactly follow the experimental set-up as described in \citet{yurochkin2020Training} for the Adult data set. In particular, we do not remove the \texttt{gender} and \texttt{race} attributes from the training data.  This helps to produce an adversarial analysis (how fair can the method become when it is explicitly training on gender and race).


\paragraph{Fair distance}
The fair distance on Adult is described in detail in \ref{app:common}.  Briefly, we consider three protected directions: $e_g$, the indicator for the \texttt{gender} attribute; $e_r$, the indicator for the \texttt{race} attribute; and $w$, the normal vector to the separating hyperplane obtained via logistic regression trained to predict the \texttt{gender} attribute from the other attributes.

\paragraph{Evaluation metrics}
As previously mentioned, we consider both gender and race as protected attributes on the Adult data set; thus we report $\mathrm{Gap}_\mathrm{Max}$ and $\mathrm{Gap}_\mathrm{RMS}$ for both the \texttt{gender} and \texttt{race} features separately.

We examine two types of counterfactual individual fairness metrics.  The first we refer to as \emph{spouse consistency} (S-cons).  The S-cons is determined by creating two copies of the data: one in which every point belongs to the \texttt{husband} category of the \texttt{relationship\_status} feature and the other in which every point belongs to the \texttt{wife} category.  Unlike the framework described in \ref{app:common}, we do not consider all categories of the \texttt{relationship\_status} feature.  To be concrete,
two altered copies of the data are used to obtain two vectors of predicted outcomes $\hat{y}^h$ and $\hat{y}^w$, and the S-cons is the fraction of outcomes that are the same in these two vectors.  Explicitly, S-cons is calculated as $\tfrac{1}{n}|\{i \colon \hat{y}^w_i = \hat{y}^h_i\}|$.  Intuitively, the S-cons measures how likely an individual is to be assigned to a different outcome simply from labeling themselves as a \texttt{wife} rather than a \texttt{husband}.

The other evaluation metric is the \emph{gender and race consistency} (GR-cons), which involves four copies of the input data.  Each copy is altered so that the \texttt{gender} and \texttt{race} features are constant on that copy of the data, and so that each copy has a different combination of the \texttt{race} and \texttt{gender} feature from the other three copies.  We then apply the classifier to each copy of the data, to produce a four vectors of predicted outcomes $\hat{y}_i \in \{0,1\}^n$, $i = 0,1,2,3$.  The GR-cons is defined as the fraction of outcomes that are the same in all of the $y_i$.

\paragraph{Method training and hyperparameter selection}
The labels for the Adult data set are unbalanced (only about 25\% of people make more that $\$50000$ per year) and thus all methods are again trained for balanced accuracy.  For the GBDT methods (Baseline, projecting, BuDRO) this is accomplished by setting the XGBoost parameter \texttt{scale\_pos\_weight} to be the ratio of zeros to ones in the labels of the training data.  The data from the other methods (baseline NNs, SenSR, and adversarial debaising) are obtained from \citet{yurochkin2020Training}; see that work for further information about the method training.

The baseline GBDT and projecting methods were trained to optimize balanced test accuracy over a grid of hyperparameters on one 80\% train/20\% test seed.  The optimal hyperparameters discovered in this way were then used to collect data on ten different 80\% train/20\% test splits: the average of the results on the test data from these new train/test splits are presented in Table \ref{tab:res-adult} in the main text (see Table \ref{tab:res-adult-appendix} for information about standard error).  The optimal GBDT parameters are presented in Table \ref{tab:adult-parms}.

\begin{table}[ht]
\caption{Optimal XGBoost parameters for the Adult data set.  For BuDRO, we also used a perturbation budget of $\eps = 0.4$.  The value of \texttt{min\_weight} for BuDRO is computed relative to the size of an 80\% training set, which contains 36177 individuals.} 
    \label{tab:adult-parms}
    \centering
    \begin{tabular}{c|ccccc}
        Method & \texttt{max\_depth} & \texttt{lambda} &  \texttt{min\_weight} & \texttt{eta} & steps\\
        \hline
        Baseline & 3 & 0.01 & 0.5 & 0.05 & 816\\
        Projecting & 8 & 0.5 & 0.5 & 0.05 & 668\\
        BuDRO & 14 & $10^{-4}$ & 0.1/36177 & 0.005 & 180
    \end{tabular}
\end{table}

BuDRO was implemented using the Dual SGD formulation to find the optimal transport map $\Pi$ as described in Algorithm \ref{alg:dualsgd}.  This involves several additional hyperparameters that we set in the following ways: an initial guess of the dual variable (set to 0.1), a batch size (set to 200), a number of SGD iterations (set to 100), a momentum parameter (set to 0.9), and an SGD learning rate (set to $10^{-4}$).  The optimal perturbation budget $\epsilon$ was found to be $0.4$.  See \ref{app:adult-res} for more information about the hyperparameter selection on Adult.

\subsubsection{Further results}\label{app:adult-res}

Table \ref{tab:res-adult-appendix} reproduces the results from the main text (averages over 10 80\% train/20\% test splits) including standard deviations.  Only the data collected from GBDT methods are reproduced in Table \ref{tab:res-adult-appendix}; the data from other methods were obtained from \citet{yurochkin2020Training}. See \citet{yurochkin2020Training} for information about the standard error in these values.

\begin{table}[htb]
\caption{Results on Adult.  We report the balanced accuracy in the second column.}
    \label{tab:res-adult-appendix}
    \centering
    \scalebox{0.825}{
    \begin{tabular}{c|c|cc|cc|cc}
     &&\multicolumn{2}{c|}{Individual fairness} & \multicolumn{2}{c|}{Gender gaps} & \multicolumn{2}{c}{Race gaps}\\
        Method & Acc & S-cons & GR-cons & $\textsc{Gap}_\mathrm{Max}$ & $\textsc{Gap}_\mathrm{RMS}$ & $\textsc{Gap}_\mathrm{Max}$ & $\textsc{Gap}_\mathrm{RMS}$ \\
        \hline
        \hline
        BuDRO & .815$\pm$0.005 & \textbf{.944}$\pm$0.013 & .957$\pm$0.007 & .146$\pm$0.012 & .114$\pm$0.013 & .083$\pm$0.018 & .072$\pm$0.017\\
        Baseline & \textbf{.844}$\pm$0.003 & .942$\pm$0.007 & .913$\pm$0.008 & .200$\pm$0.006 & .166$\pm$0.008 & .098$\pm$0.013 & .082$\pm$0.013\\
        Project & .787$\pm$0.005 & .881$\pm$0.079 & \textbf{1}$\pm$0.000 & \textbf{.079}$\pm$0.022 & .069$\pm$0.018 & .064$\pm$0.021 & .050$\pm$0.016\\
        Reweigh & .784$\pm$0.005 & .853$\pm$0.010 & .949$\pm$0.009 & .131$\pm$0.021 & .093$\pm$0.015 & \textbf{.056}$\pm$0.031 & \textbf{.043}$\pm$0.022\\
    \end{tabular}}
\end{table}

\begin{figure*}[htb]
	\centering
	\includegraphics[width=1.0\textwidth]{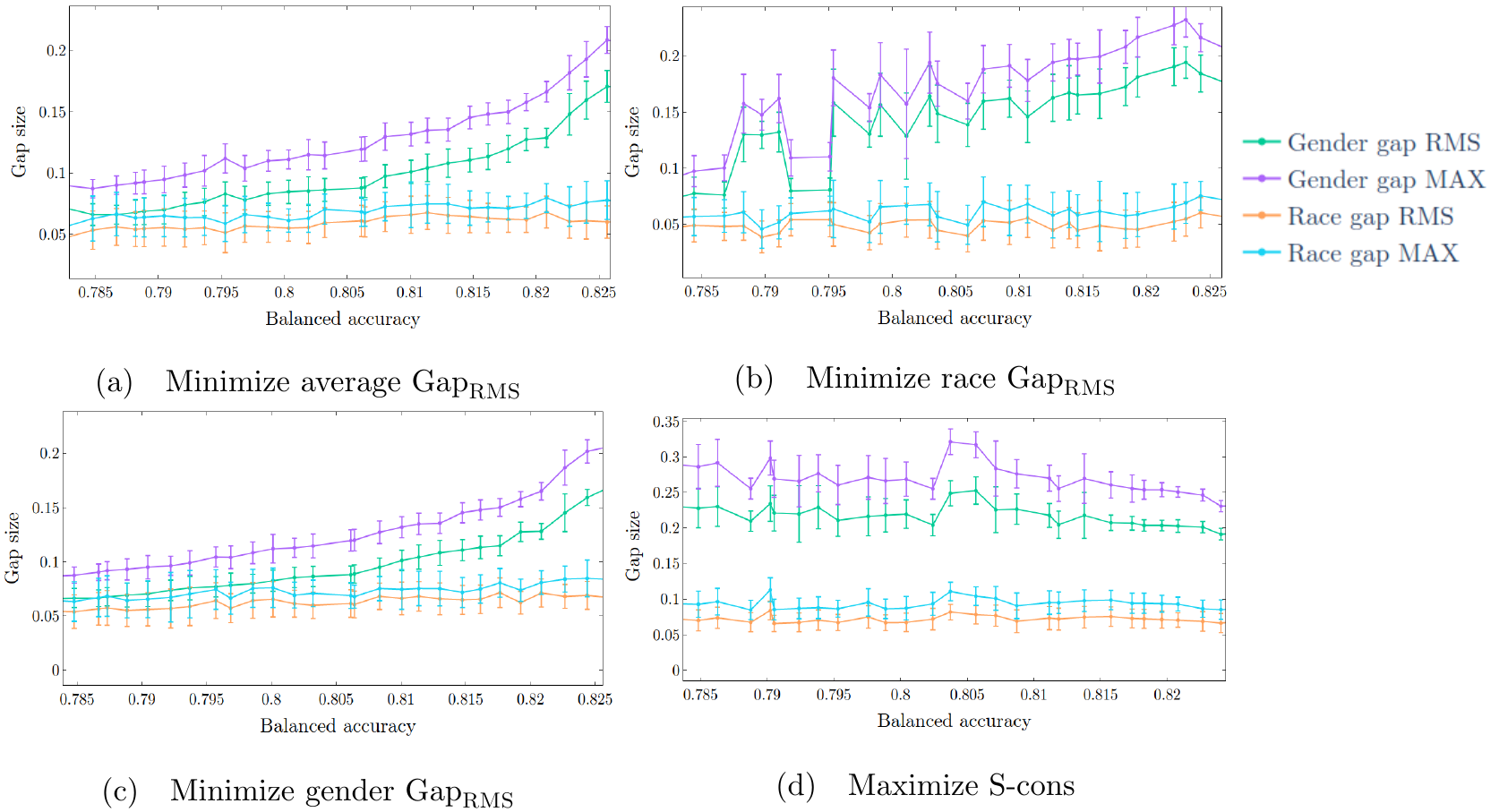}
	\caption{Fairness measures at given accuracy levels for the BuDRO method on the Adult data set, considered over 10 train/test splits.
	The results from each choice of hyperparameters are averaged over all train/test splits before being grouped into accuracy bins.  The plotted points are the ones from each bin that optimize the specified quantity ((a) average $\mathrm{Gap}_{\textsc{RMS}}$, (b) race $\mathrm{Gap}_{\textsc{RMS}}$, (c) gender $\mathrm{Gap}_{\textsc{RMS}}$, (d) spouse consistency).  Error bars represent one standard deviation.
  Empirically, the gender gaps were harder to reduce than the race gaps.  The S-cons in all of these pictures never drops below 94\%.}
	\label{fig:adult-gaps}
\end{figure*}

We illustrate the fairness of BuDRO at different accuracy levels in Figures \ref{fig:adult-gaps} and \ref{fig:adult-methcomp}.  The data in Figure \ref{fig:adult-gaps} was also used to guide hyperparameter selection for the BuDRO method on the Adult data set.

Before analyzing the information in these figures, we provide a description of how they were generated.
We separate the accuracy axis (horizontal) into bins of a fixed width (here, the bin size is 0.0016).
We then consider ten 80\% train/20\% test splits of the data set, and explore a grid of hyperparameters for each of these train/test splits (see the included code for the specific hyperparameter grids that we examined).
The results from each choice of hyperparameters are averaged (over all ten train/test splits) and are placed in the bin containing the average accuracy; the error bars in the figures correspond to the standard error from this averaging.

In Figure \ref{fig:adult-gaps}, we present four figures: each figure is obtained by determining the set of hyperparameters in each accuracy bin that optimize the average (over the 10 seeds) of a given fairness quantity (on the test set). Thus, each figure contains the data from approximately 30 hyperparameter selections (one for each bin, different for each figure).  For example, in Figure \ref{fig:adult-gaps}(c), each point was obtained from the classifier constructed using the hyperparameters that minimize the gender $\textsc{Gap}_\textsc{RMS}$ in the corresponding accuracy bin.  In a similar fashion, Figure \ref{fig:adult-gaps}(a) is constructed by selecting the hyperparameters from each accuracy bin that minimize the average of the race $\textsc{Gap}_\textsc{RMS}$ and the gender $\textsc{Gap}_\textsc{RMS}$.
In all of the plots, the S-cons never drops below 94\%, and the GR-cons remains similarly high; thus, we do not focus on the consistency measures here.

In the following analysis, we
concentrate on minimizing the gender $\textsc{Gap}_\textsc{RMS}$ (Figure \ref{fig:adult-gaps}(c)) due to the fact that the gender gaps were significantly more difficult to shrink than the race gaps in Figure \ref{fig:adult-gaps}.  In fact, as seen in Table \ref{tab:res-adult}, the baseline classier exhibits fairly small race gaps even though it is trained with knowledge of the \texttt{race} feature.  In real-world use, however, it is not clear which fairness quantity a user would be required to optimize; thus, Figure \ref{fig:adult-gaps} presents a picture of the different trade-offs involved.  

The BuDRO data in Table \ref{tab:res-adult} was collected from ten different (new) 80\% train/20\% test splits, using hyperparameters chosen by examining Figure \ref{fig:adult-gaps}(c).  We were interested in finding a point with high accuracy and high spouse consistency; thus, we examined hyperparameters corresponding to the points in the accuracy range $0.81$ to $0.825$ and looked for patterns in these hyperparameters.  We attempt this generalization to make hyperparameter selection slightly more realistic: it seems willfully ignorant to disregard all of this data when selecting hyperparameters for our final tests, and it is at least plausible that a user could find some of these generally good hyperparameters via hand-tuning.  This results in the set of BuDRO hyperparameters that were presented in Table \ref{tab:adult-parms}.  See additionally the code for more details about the selected hyperparameters.


In Figures \ref{fig:adult-gaps}(a) and (c), we observe a trade-off between accuracy and fairness with the BuDRO method.  That is, by decreasing accuracy (which generally corresponds to increasing the perturbation parameter $\epsilon$), we are able to decrease the group fairness gaps, all at a high level of S-cons.  Thus, it is possible to chose parameters to produce smaller group fairness gaps (with potentially lower accuracy) if the application calls for that.

\begin{figure}[htb]
    \begin{subfigure}[t]{0.5\textwidth}
		\centering
		\includegraphics[width=\linewidth]{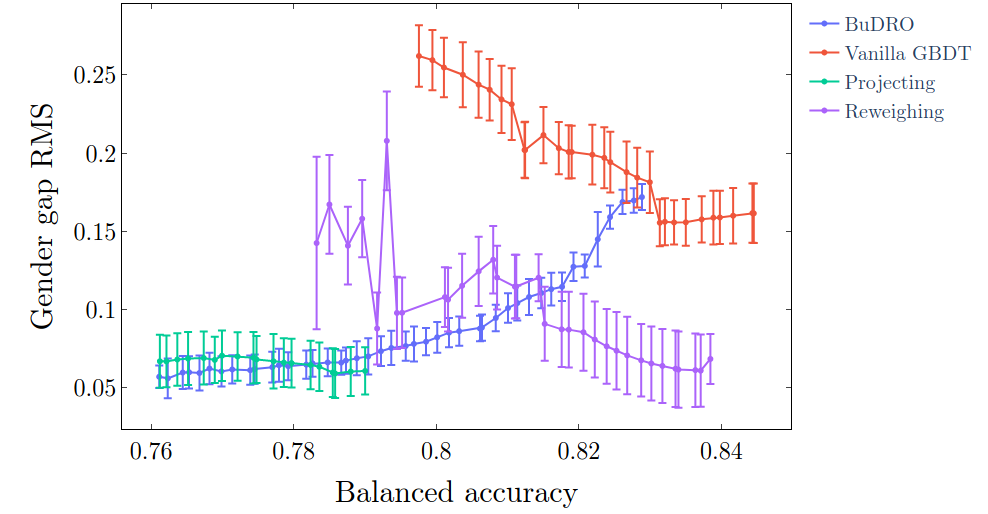}
		\caption{}
		\label{fig:adult-grms-comp}
	\end{subfigure}%
	\begin{subfigure}[t]{0.5\textwidth}
		\centering
		\includegraphics[width=\linewidth]{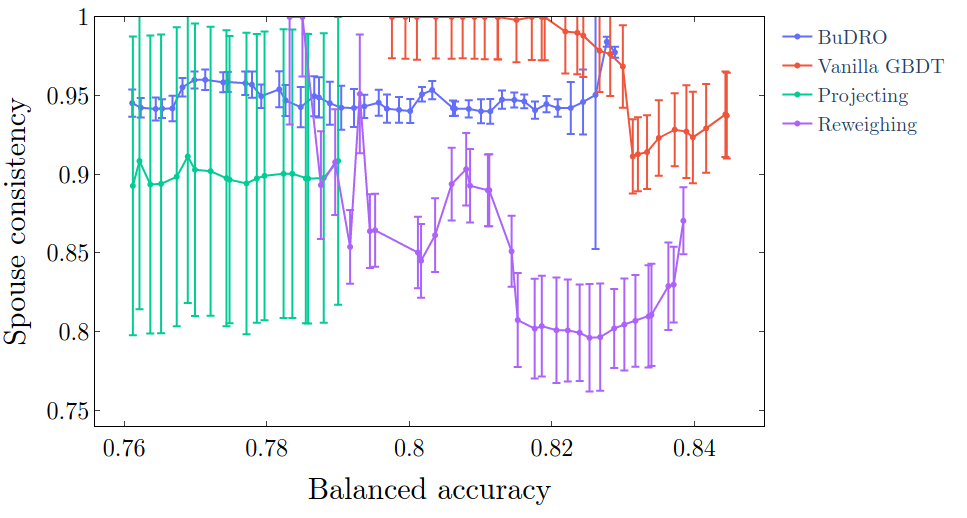}
		\caption{}
		\label{fig:adult-scons-comp}
	\end{subfigure}%
    \caption{The accuracy vs fairness trade-off of BuDRO when compared to other baseline boosting algorithms on the Adult data set.  All lines are chosen to minimize gender $\mathrm{Gap}_{\textsc{RMS}}$.  (a) contains a comparison of the Gender gap RMS.   The BuDRO line also appears in Figure \ref{fig:adult-gaps}(c).  (b) contains a comparison of the S-cons.  Error bars represent one standard deviation. }
    \label{fig:adult-methcomp}
\end{figure}

Figure \ref{fig:adult-methcomp} includes a comparison of the BuDRO method to some other baseline boosting methods to further illustrate the trade-off between accuracy and fairness.  Each point in Figure \ref{fig:adult-methcomp} is chosen to be the one from the accuracy bin that minimizes the gender $\mathrm{Gap}_\textsc{RMS}$.

This figure is constructed specifically to explore how fair we can make a classifier at given (fixed) levels of accuracy. 
The vanilla GBDT method always produces high accuracy classifiers (in the hyperparameter grid that we consider).  This comes with large group fairness gaps that we are unable to decrease.  On the other hand, the projecting method always produces good group fairness gaps, but we are unable to obtain high accuracy with this method.  Finally, the reweighing method can obtain high accuracy with low fairness gaps, but it never produces an acceptable S-cons.

Figure \ref{fig:adult-methcomp} suggests that BuDRO is a better method than projecting at all accuracy levels on Adult: at any accuracy level, BuDRO matches the group fairness gaps produced by projecting while improving upon the (consistency of the) individual fairness metric. Projecting always produces a classifier with small group fairness gaps.  Unlike projecting, however, BuDRO can also be used to construct a more accurate classifier if we are allow for slightly larger group fairness gaps.  Depending on the requirements of the application (e.g.\ as defined by law or other application specific fairness goals), this allows for more flexibility in the creation of individually fair and accurate classifiers.

It is also interesting to observe how the BuDRO curve in Figure \ref{fig:adult-methcomp} meets the curve for Vanilla GBDTs.  
Specifically, as the curves meet, the gender gap of the BuDRO method is increasing, while the gender gap of the Vanilla GBDT curve appears to be slightly decreasing.  
We speculate that this is due to the fact that the grid of hyperparameters used in the construction of Figure \ref{fig:adult-methcomp} does not contain values of $\epsilon$ smaller that $0.1$.  That is, the value of $\epsilon$ does not go down to $0$, so we can not expect to precisely recover the baseline results. Essentially, we \emph{force} a small perturbation in the data while also pushing for high accuracy.   Thus, it appears that the high accuracy points in the Figure correspond to solutions that are obtained by improving race fairness without significantly improving gender fairness (see also Figure \ref{fig:adult-gaps}(c) - the race fairness always remains small). These high accuracy cases apparently find a perturbation that is mostly along the race axis.

Overall, these figures provide further evidence that the BuDRO method creates an individually fair classifier while still obtaining high accuracy (on tabular, structured data like Adult) due to the ability to leverage powerful GBDT methods.


\subsection{COMPAS}
\label{app:compas}

The COMPAS recidivism prediction data set, compiled by ProPublica \citep{larson2016we}, includes information on the offenders’ \texttt{gender}, \texttt{race}, \texttt{age}, criminal history (\texttt{charge\_for\_arrest}, \texttt{number\_prior\_offenses}), and the \texttt{risk\_score} assigned to the offender by COMPAS. ProPublica also collects whether or not these offenders recidivate within two years as the ground truth. More details and discussions of the data set can be found in \citet{angwin2016bias,flores2016false}. 

We remove the \texttt{risk\_score} attribute, standardize the \texttt{number\_prior\_offenses} attribute and one-hot encode the \texttt{age} attribute, yielding the final data set of 5278 individuals with 7 features.

Since there are only seven features (with only one continuous feature) in the COMPAS data, there are many different pairs of individuals $(i,j)$ with the same similarity distance $C_{i,j}$. 
Then, there can be a lot of different solutions while finding $t \in \argmax_i R_{ij} - \eta^*C_{ij}$ when using the dual method (see Appendix \ref{sec:entregsgd} and Algorithm \ref{alg:pidual}).
Therefore, we consider the entropic regularization form \eqref{eq:entreg} of the linear program \eqref{eq:pistar}, and solve it using the fast Sinkhorn method in Algorithm \ref{alg:sink} and \ref{alg:pisink}. 

\paragraph{Fair metric}
We define three protected directions: $e_g$, the indicator vector of \texttt{gender}, $e_r$, the indicator vector of \texttt{race}, and a protected direction $w$ obtained by eliminating the \texttt{race} attribute and training the logistic regression for binary label \texttt{race} on the edited data set. Then, we follow the steps for computing the projection matrix $A=\text{span}\{e_g, e_r, w\}$ and calculating the fair distances $d_x$ in \eqref{eq:adultdist} for all pairs of individuals. 


\paragraph{Evaluation metrics}

We evaluate the methods using several individual fairness metrics, as well as some group fairness metrics. For group fairness metrics, we report $\textsc{Gap}_\mathrm{Max}$ and $\textsc{Gap}_\mathrm{RMS}$ for \texttt{race} and \texttt{gender} separately. 
We consider two counterfactual individual fairness evaluation measures on COMPAS based on the two protected attributes \texttt{race} and \texttt{gender} (which are both binary). Therefore, we can make the counterfactual examples by flipping the protected attributes. For example, we generate two copies of the data set: one with all individuals are female, the other with all individuals are male. Then the \emph{gender consistency} (G-cons) is obtained by calculating the fraction of the same classified outcomes on two copies. 
The \emph{race consistency} (R-cons) can be calculated in a similar way.

\paragraph{Method training and hyperparameter selection}

We compare to the same methods that were considered on Adult: baseline GBDT, projecting, and reweighing (based on XGBoost);  and baseline NN, SenSR, and adversarial debiasing (based on neural networks). 

The results reported are averaged over ten random training (80\%) and testing (20\%) splits of the data set. We implement the adversarial debiasing methods (in the \texttt{adversarial\_debiasing} class from the IBM’s \texttt{AIF360} package, version \texttt{0.2.2} \citep{bellamy2018AI}) with the default hyperparameters, and combine the implementation of reweighing (in the \texttt{reweighing} class from \texttt{AIF360} with default hyperparameters) with running XGBoost (default parameters) for 100 boosting steps.  
For the baseline GBDT and projecting, we select  hyperparameters by splitting 20\% of the training data into a validation set and evaluating the performance on the validation set. For Baseline NN, SenSR and the BuDRO methods, we manually train the hyperparameters on one train/test split and use the resulting hyperparameters for computing the averaged results of $10$ restarts. 

We report the optimal hyperparameters for the GBDT methods in Table \ref{tab:compas-parms}.
For these methods, \texttt{lambda} is $10^{-8}$, and \texttt{min\_child\_weight} is $0.1/n_{\text{train}}$, where $n_{\text{train}}$ is the number of training samples.
We set \texttt{scale\_pos\_weight} to $1$ for projecting and the baseline, \texttt{scale\_pos\_weight} to be the ratio of zeros to ones in the labels of the training data for BuDRO.
The optimal perturbation budget $\epsilon$ for BuDRO  is 0.12.
\begin{table}[ht]
\caption{Optimal XGBoost parameters for COMPAS data set.}
    \label{tab:compas-parms}
    \centering
    \begin{tabular}{c|ccc}
        Method & \texttt{max\_depth}  & \texttt{eta} & steps\\
        \hline
        Baseline & 3  & $5\times 10^{-4}$ & 1600\\
        Projecting & 4 & $7.5\times 10^{-4}$ & 2000\\
        BuDRO & 2 & $1.5\times 10^{-5}$ &68
    \end{tabular}
\end{table}


For the neural network based methods, the optimal hyperparameters are described below.
For baseline NN: learning rate = $5\times 10^{-6}$, number of epochs = 27000.
For SenSR: perturbation budget = 0.01, epoch = 2000, full epoch = 10, full learning rate = 0.0001, subspace epoch = 10, subspace learning rate = 0.1.

\paragraph{Results}
The results in Table \ref{tab:res-compas} with the associated standard deviations of BuDRO and the comparison methods are shown in Table \ref{tab:res-compas-appendix}.
\begin{table}[ht!]
\caption{Results on COMPAS data set. These results are based on 10 random splits into 80\% training and 20\% test data.} 
    \label{tab:res-compas-appendix}
    \centering
    \scalebox{0.73}{
    \begin{tabular}{c|c|cc|cc|cc}
     &&\multicolumn{2}{c|}{Individual fairness} &
     \multicolumn{2}{c|}{Gender gaps} & \multicolumn{2}{c}{Race gaps}\\ 
        Method & Acc & G-cons & R-cons & $\textsc{Gap}_\mathrm{Max}$ & $\textsc{Gap}_\mathrm{RMS}$ & $\textsc{Gap}_\mathrm{Max}$ & $\textsc{Gap}_\mathrm{RMS}$ \\
        \hline \hline
        BuDRO &  0.652$\pm$0.012  &  \textbf{1.000}$\pm$0.000 &  \textbf{1.000}$\pm$0.000 &  \textbf{0.124}$\pm$0.053 &  \textbf{0.099}$\pm$0.037  &  0.145$\pm$0.028 &  0.125$\pm$0.021 \\
        Baseline &  0.677$\pm$0.014 &  0.944$\pm$0.038 &  0.981$\pm$0.040 &  0.223$\pm$0.055&  0.180$\pm$0.041  &  0.258$\pm$0.056&  0.215$\pm$0.046  \\
		Project &  0.671$\pm$0.012&  0.873$\pm$0.025 &  \textbf{1.000}$\pm$0.000 &  0.189$\pm$0.040 &  0.150$\pm$0.030 &  0.230$\pm$0.043 &  0.185$\pm$0.033   \\
		Reweigh &  0.666$\pm$0.020 &  0.788$\pm$0.023 &  0.813$\pm$0.047 &  0.245$\pm$0.054 &  0.207$\pm$0.046 &  \textbf{0.092}$\pm$0.027 &  \textbf{0.069}$\pm$0.019   \\ \hline
		Baseline NN &  \textbf{0.682}$\pm$0.011 &  0.841$\pm$0.027 &  0.907$\pm$0.027 &  0.282$\pm$0.045 &  0.246$\pm$0.028 &  0.258$\pm$0.055 &  0.228$\pm$0.047  \\
		SenSR &  0.652$\pm$0.018  &  0.977$\pm$0.029 &  0.988$\pm$0.016 &  0.167$\pm$0.065 &  0.129$\pm$0.047 &  0.179$\pm$0.039 &  0.159$\pm$0.032  \\
		Adv. Deb. &  0.671$\pm$0.016 &  0.854$\pm$0.028 &  0.818$\pm$0.088 &  0.246$\pm$0.064 &  0.219$\pm$0.051  &  0.130$\pm$0.065 &  0.108$\pm$0.059  \\ 
    \end{tabular}}
\end{table}

\subsection{Method timing information}\label{sec:timing}

Table \ref{tab:run-time} contains the training runtimes of the vanilla GBDT and the BuDRO methods. The runtimes of the other fair GBDT methods (projecting and reweighing) are dominated by the time required for running vanilla GBDT after preprocessing; thus, the runtimes for these methods are omitted here.  The results show that the BuDRO method as defined in Algorithm \ref{alg:xgb} 
(i.e.\ using SGD to find a dual solution with $O(n^2)$ time required to recover the primal solution)
is scalable to large problems.

\begin{table}[ht]
\caption{Average runtime for training, including standard deviations.  The number in parentheses is the number of trials used to compute the average. In all trials, the hyperparameters (including the number of boosting steps) were examined during the generation of the data presented in Section \ref{sec:expts}.  Each baseline GBDT trial ran for 1000 boosting steps on 4 CPUs.  BuDRO ran for 500 steps on 2 CPUs on German credit, 200 steps using 2 CPUs on COMPAS, and 200 steps using 4 CPUs and 1 GPU on Adult.}
    \label{tab:run-time}
    \centering
    \begin{tabular}{c|cc|cc}
         &\multicolumn{2}{c|}{Problem size} &\multicolumn{2}{c}{Time (seconds)}\\ 
        Data set & training samples &  features & Baseline GBDT & BuDRO \\
        \hline
        German credit  & 800 & 62 & 6.5 $\pm$ 0.3 (10) & 105.0 $\pm$ 25.5 (480)\\
        COMPAS         & 4222 & 7 & 17.5 $\pm$ 0.7 (10) & 154.9 $\pm$ 4.5 (10) \\
        Adult          & 36177 & 41 & 201.6 $\pm$ 5.4 (10) & 1455.9 $\pm$ 263.2 (6)
    \end{tabular}
\end{table} 



\end{document}
